\documentclass{article}
\usepackage[numbers]{natbib}

\usepackage{mathtools}
\usepackage[preprint]{neurips_2026}
\usepackage[utf8]{inputenc} 
\usepackage[T1]{fontenc}    
\usepackage{hyperref}      
\usepackage{url}            
\usepackage{booktabs}       
\usepackage{amsfonts}      
\usepackage{nicefrac}       
\usepackage{microtype}    
\usepackage{xcolor}        
\usepackage{amsmath}
\usepackage{cleveref}     
\usepackage{graphicx}
\usepackage{algorithm}
\usepackage{algorithmic}
\usepackage{amsthm}
\usepackage{amssymb}

\usepackage{todonotes}

\usepackage{subcaption}

\newtheorem{theorem}{Theorem}[section]

\newtheorem{lemma}[theorem]{Lemma}

\newtheorem{definition}[theorem]{Definition}
\newtheorem{assumption}[theorem]{Assumption}

\usepackage{natbib}

\DeclareMathOperator*{\argmin}{arg\,min}

\definecolor{citeblue}{RGB}{0,102,204}    % or {30,90,180} for a darker shade
\hypersetup{
    colorlinks=true,
    citecolor=citeblue,
    linkcolor=citeblue,
    urlcolor=citeblue,
}

\title{Differentially Private Hyperparameter Tuning using Local Bayesian Optimization}

\author{%
  Getoar Sopa\thanks{Equal contribution} \\
  Department of Statistics\\
  Columbia University\\
  New York, NY \\
  \And
  Juraj Marusic\textsuperscript{*} \\
  Department of Statistics\\
  Columbia University\\
  New York, NY \\
  \AND
  Marco Avella Medina \\
  Department of Statistics\\
  Columbia University\\
    New York, NY \\
  \And
  John P. Cunningham \\
  Department of Statistics\\
  Columbia University\\
    New York, NY \\
}

\begin{document}

\maketitle
\begin{abstract}
Hyperparameter tuning is a key component of machine learning procedures, but when validation data contain sensitive user information, search mechanisms can leak private information through the selected configuration. 
Existing differentially private hyperparameter tuning methods often rely on near-random search, while prior differentially private Bayesian optimization approaches are typically global and, therefore, scale poorly with the hyperparameter dimensionality. 
We study differentially private hyperparameter tuning using local Bayesian optimization, focusing on settings where the validation objective is available only through noisy black box evaluations and gradients are unavailable or impractical to compute. 
We introduce DP-GIBO, a differentially private local Bayesian optimization framework that privately approximates gradients using a Gaussian Process surrogate. 
Under suitable conditions, we prove that DP-GIBO converges to a locally optimal hyperparameter configuration up to a privacy-dependent error, with dimensional dependence that is polynomial rather than exponential.
Empirically, we show that DP-GIBO provides scalable private hyperparameter tuning across multiple tasks, substantially outperforming non-private random search and global Bayesian optimization baselines in moderate-to-high-dimensional hyperparameter spaces.
\end{abstract}

\section{Introduction}
\label{introduction}

Differentially private (DP) machine learning methods have increasingly seen use by practitioners due to the sensitive nature of the data used in some machine learning applications. Examples include healthcare data \citep{kourou2015machine}, data collected at scale by companies that must comply with privacy regulations \citep{cummings2018role}, and more. Differential Privacy allows practitioners to reason about the amount of `data leakage' to an individual by the inclusion of one data point to the input of an algorithm \citep{dwork2006differential}. To preserve these guarantees end-to-end, hyperparameters must also be tuned under differential privacy, since the choice of hyperparameters can itself reveal information about the underlying data.

Independent of this body of research, there have been many recent advances in optimizing black box functions using Bayesian optimization (BO) \citep{shahriari2015taking}. Often, these black box functions appear in the optimization of hyperparameters of machine learning algorithms \citep{snoek2012practical}, reinforcement learning \citep{brochu2010tutorial, muller2021local}, experimental design \citep{greenhill2020bayesian}, and more \citep{shahriari2015taking}. 
These methods allow for the efficient evaluation of points in the domain space, and therefore often find `good' parameter configurations in fewer rounds of training than classical grid search methods. Unfortunately, classical \emph{global} BO fits a single surrogate over the entire search space and selects each query to optimize a global acquisition function, an approach that breaks down in high dimensions, rendering unrestricted global optimization nearly intractable for as few as $10$ dimensions \citep{garnett2023bayesian, eriksson2021high}. \emph{Local} Bayesian optimization approaches sidestep this by restricting attention to a neighborhood of the current iterate: TuRBO \citep{eriksson2019scalable} maintains many small trust regions, while Gradient Informative Bayesian Optimization (GIBO) \citep{muller2021local, wu2024behavior} uses BO to approximate the local gradient and descends along it.

A unifying question, then, is: \textit{how can we tune hyperparameters in a differentially private way}? Recent work has mostly considered nonadaptive, random grid search methods \citep{liu2019private,papernot2021hyperparameter, mohapatra2022role}. These papers have developed powerful techniques for ensuring a constant privacy cost for training the model for given hyperparameter configurations, even as a random---potentially large---number of configurations are evaluated.  Recently, this line of work was extended to adaptive hyperparameter search strategies by \citet{wang2023dp}. These methods have in common that in order to achieve a small fixed privacy cost in higher dimensions, the hyperparameter sampling mechanism has to be close to uniformly random, making the hyperparameter search intractable in these problems. Recent work has also laid a theoretical foundation for efficient differentially private bilevel optimization for structured smooth bilevel problems with access to first- and second-order derivative information \citep{kornowski2024differentially, lowy2025differentially}, though this information is often unavailable or infeasible to compute.

\begin{figure*}[t]
  \centering
  \includegraphics[width=\textwidth]{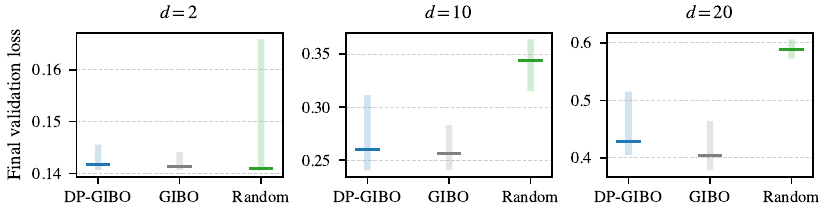}
  \caption{\textit{DP-GIBO scales to higher-dimensional private hyperparameter tuning.} Final validation loss on a group LASSO tuning task with $d \in \{2, 10, 20\}$ group regularization hyperparameters, where each evaluation refits the group LASSO on the training split and queries the validation loss on held-out users, yielding a black box objective with no usable gradient w.r.t.\ hyperparameter. Random search is competitive in low dimensions but degrades sharply as $d$ grows, and offers no privacy protection by default, so any released hyperparameter is a function of the validation data and may reveal information about individual users. In contrast, DP-GIBO closely tracks its non-private counterpart (GIBO) throughout, paying only a modest privacy cost.}
  \label{fig:group-lasso-plots}
  \vspace{-0.4cm}
\end{figure*}

On the other hand, \citet{kusner2015differentially} devised a framework for selecting candidate hyperparameter configurations using Bayesian optimization when first- and second-order derivative information is unavailable, while maintaining privacy in the validation dataset. Making BO differentially private was a significant development, and the authors show that their algorithm provided rapid convergence to a minimizing parameter configuration for certain classes of problems.  However, as a classic global Bayesian Optimization procedure, it too suffers in higher dimensions.

Here we introduce a differentially private analog of GIBO (DP-GIBO), which can be considered a black box approximation of Noisy Gradient Descent when gradients are unknown or prohibitively expensive to compute, and hence unavailable to the practitioner. 
Our algorithm enables efficient, private optimization over high-dimensional continuous spaces, thus filling the gap left by traditional global Bayesian optimization or random search routines.
This makes DP-GIBO applicable to a wide range of private hyperparameter tuning problems where the validation loss is queried as a black box---ranging from a single regularization parameter in vanilla LASSO, to per-group penalties in group LASSO, to the joint tuning of regularization strengths and kernel length-scales in kernel methods, where the hyperparameter dimension can easily exceed one hundred.
We prove that, for sufficiently well-behaved objectives, the suboptimality gap of the algorithm's final estimate converges at a rate that depends polynomially on the problem dimensionality, up to an error scaling with the magnitude of the injected privacy noise.
\Cref{fig:group-lasso-plots} illustrates this scaling behavior on a private group LASSO tuning task, where the curse of dimensionality clearly affects random search but not DP-GIBO.

\paragraph{Contributions.} In summary, in this work we:
(1) Propose DP-GIBO, a drop-in differentially private algorithm for tuning (potentially high-dimensional) continuous hyperparameters in a black box manner.
(2) Prove convergence rates for the suboptimality gap of DP-GIBO under standard regularity conditions, with explicit dependence on dimension and privacy noise.
(3) Empirically demonstrate that DP-GIBO substantially outperforms existing private black box tuning methods in higher-dimensional regimes where global Bayesian optimization and random search collapse.
(4) Introduce a modification to the GIBO framework that enables tighter error control and yields a substantial computational speedup in moderate dimensions.
\section{Preliminaries and Motivation}
\label{preliminaries}
This section is organized as follows. We first provide a primer on Differential Privacy and Gaussian Processes. We then formally define the problem setting, using the private tuning of regularization hyperparameters in a group LASSO as a motivating example. Finally, we position our framework at the intersection of Bayesian optimization, noisy gradient descent, and private hyperparameter tuning.

\subsection{Differential Privacy}
We adopt the Gaussian Differential Privacy (GDP) framework \citep{dong2022gaussian}. Let $\mathcal{M}: \mathcal{X}^n \to \mathcal{Y}$ denote a randomized mechanism taking as input $\mathbf{x} = (x_1, \dots, x_n) \in \mathcal{X}^n$, and call $\mathbf{x}, \mathbf{x}'$ neighboring datasets if they differ in exactly one entry. The following definitions characterize GDP.
\begin{definition}
     Given two probability distributions $P$ and $Q$ and a rejection rule $0\leq \phi \leq 1$, the trade-off function is defined as
    $
    T(P,Q)(\alpha) = \inf \{1 - \mathbb{E}_Q[\phi]:  \mathbb{E}_P[\phi] \leq \alpha\},
    $
    where the infimum is taken over the set of measurable rejection rules.
\end{definition}

\begin{definition}
     A randomized mechanism $\mathcal{M}$ is $\mu$-GDP for $\mu>0$ if, for all neighboring datasets $\mathbf{x}$ and $\mathbf{x}'$, the mechanism satisfies
    $
    T\left(\mathcal{M}(\mathbf{x}), \mathcal{M}(\mathbf{x}')\right)(\alpha) \geq T\left(N(0,1), N(\mu,1)\right) (\alpha), \quad\forall \alpha \in[0,1].
    $
\end{definition}
Equivalently, a mechanism is $\mu$-GDP if distinguishing its output on neighboring datasets is at least as hard as distinguishing $\mathcal{N}(0,1)$ from $\mathcal{N}(\mu,1)$, providing a clean hypothesis testing interpretation of the privacy guarantee \citep{wasserman2010statistical}.
To achieve our privacy result, we rely on the Gaussian Mechanism and composition of GDP.
\begin{definition}
    The global sensitivity $GS(h)$ of a $d$-dimensional function $h$ is defined as 
    \(
    GS(h) = \sup_{\mathbf{x}, \mathbf{x}'}\|h(\mathbf{x}) - h(\mathbf{x}')\|,
    \)
    where the supremum is taken over all  neighboring datasets $\mathbf{x}$ and $\mathbf{x}'$. 
\end{definition}

\begin{theorem}[Gaussian Mechanism \cite{dong2022gaussian}]
    \label{gaussianmechanism}
    Let $h$ be a deterministic function with finite global sensitivity $GS(h)$. The randomized function $\tilde{h}(x) = h(x) + \tfrac{GS(h)}{\mu}Z$, where $Z \sim \mathcal{N}(0,I)$, is $\mu$-GDP. 
\end{theorem}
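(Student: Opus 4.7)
The plan is to reduce the multivariate trade-off function $T\bigl(\tilde h(\mathbf{x}),\tilde h(\mathbf{x}')\bigr)$ to the canonical one-dimensional Gaussian trade-off $G_\mu$, and then use monotonicity of $G_\nu$ in $\nu$ together with the sensitivity bound. Fix neighboring datasets $\mathbf{x},\mathbf{x}'$ and write $\sigma = GS(h)/\mu$ and $v = h(\mathbf{x}') - h(\mathbf{x})$, so that $\tilde h(\mathbf{x}) \sim N(h(\mathbf{x}),\sigma^2 I_d)$ and $\tilde h(\mathbf{x}')\sim N(h(\mathbf{x})+v,\sigma^2 I_d)$. My first step will be to show that the trade-off function is invariant under a common affine transformation applied to both arguments (subtract $h(\mathbf{x})$, divide by $\sigma$); this is immediate from the definition, since one can pull back any rejection rule through the bijection without changing the two expectations.

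Next I will reduce the dimension. After the affine normalization the two laws become $P = N(0,I_d)$ and $Q = N(v/\sigma,I_d)$. By the Neyman--Pearson lemma, optimal level-$\alpha$ tests are those of the form $\phi = \mathbf{1}\{\langle u, X\rangle > c\}$ with $u = (v/\sigma)/\|v/\sigma\|$, since the log-likelihood ratio depends on $X$ only through the scalar projection $\langle u,X\rangle$. Under $P$ this projection is $N(0,1)$ and under $Q$ it is $N(\|v\|/\sigma,1)$. Substituting into the definition of $T$ then gives $T(P,Q)(\alpha) = G_{\|v\|/\sigma}(\alpha) = \Phi\bigl(\Phi^{-1}(1-\alpha) - \|v\|/\sigma\bigr)$.

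To finish, I need the monotonicity fact that $\nu \mapsto G_\nu(\alpha)$ is non-increasing in $\nu$ for every fixed $\alpha\in[0,1]$: this is simply because $\Phi$ is increasing, so shifting the mean farther apart makes the two hypotheses easier to distinguish and lowers the achievable power-to-size trade-off. By the definition of $GS(h)$ one has $\|v\| \le GS(h)$, hence $\|v\|/\sigma \le GS(h)/\sigma = \mu$, and thus
\[
T\bigl(\tilde h(\mathbf{x}),\tilde h(\mathbf{x}')\bigr)(\alpha) \;=\; G_{\|v\|/\sigma}(\alpha) \;\ge\; G_\mu(\alpha),
\]
for all $\alpha\in[0,1]$, which is exactly the definition of $\mu$-GDP.

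The main obstacle is the multivariate-to-univariate reduction: one must justify carefully that the Neyman--Pearson optimal tests for two Gaussians differing only in mean depend on $X$ solely through the scalar projection onto the mean-shift direction, and that the infimum in the definition of $T$ is attained by exactly these tests. Everything else—affine invariance of the trade-off function and monotonicity of $G_\nu$ in $\nu$—is essentially bookkeeping.
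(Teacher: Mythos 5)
Your argument is correct, but there is nothing in the paper to compare it against: the paper states this result as imported background, explicitly attributing it (together with the composition theorem) to \cite{dong2022gaussian}, and gives no proof of its own. What you have written is essentially the standard proof from that reference: affine invariance of the trade-off function, the Neyman--Pearson reduction of the $d$-dimensional equal-covariance Gaussian testing problem to the scalar projection onto the mean-shift direction (so that $T(N(\theta,\sigma^2 I_d),N(\theta',\sigma^2 I_d)) = G_{\|\theta-\theta'\|/\sigma}$), and the monotonicity $G_{\nu}\ge G_{\mu}$ for $\nu\le\mu$ combined with $\|h(\mathbf{x})-h(\mathbf{x}')\|\le GS(h)$. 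All three steps are sound; the Neyman--Pearson step you flag as the ``main obstacle'' is unproblematic here because the Gaussian likelihood ratio has a continuous distribution, so non-randomized threshold tests achieve every level $\alpha$ exactly and the infimum in the definition of $T$ is attained. Two trivial points you may wish to make explicit: the degenerate case $v=0$ (where $P=Q$ and $T(P,Q)(\alpha)=1-\alpha=G_0(\alpha)\ge G_\mu(\alpha)$ directly), and the case $GS(h)=0$, where no noise is needed at all. In short, the proposal is a correct, self-contained proof of a theorem the paper chose to cite rather than prove.
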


\begin{theorem}[GDP Composition \cite{dong2022gaussian}]
    \label{composition}
    If the algorithms $\mathcal{A}_t$ are $\mu_t$-GDP for $1\leq t\leq T$, then the $T$-fold composition of $\mathcal{A}_1$, \ldots, $\mathcal{A}_T$ is $\sqrt{\sum_{t=1}^T \mu_t^2}$-GDP.
\end{theorem}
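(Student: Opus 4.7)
The plan is to reduce the claim to two tensorization facts about trade-off functions and then iterate. Define the tensor product of trade-off functions in the Dong--Roth--Su sense: for two pairs $(P_i, Q_i)$ with trade-off functions $f_i = T(P_i, Q_i)$, set $f_1 \otimes f_2 := T(P_1 \times P_2, Q_1 \times Q_2)$. The two ingredients I would need are (a) an adaptive composition lemma stating that if $\mathcal{A}_t$ is $f_t$-DP (i.e., $T(\mathcal{A}_t(\mathbf{x}), \mathcal{A}_t(\mathbf{x}')) \geq f_t$ for all neighbors), then the $T$-fold adaptive composition has trade-off function lower-bounded by $f_1 \otimes \cdots \otimes f_T$; and (b) the Gaussian tensorization identity $G_{\mu_1} \otimes G_{\mu_2} = G_{\sqrt{\mu_1^2+\mu_2^2}}$. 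Given these, the claim follows by induction on $T$: each $\mathcal{A}_t$ is $G_{\mu_t}$-DP by hypothesis, so the composition dominates $G_{\mu_1} \otimes \cdots \otimes G_{\mu_T} = G_{\sqrt{\sum \mu_t^2}}$.

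For step (b), I would use the Neyman--Pearson lemma. The trade-off function $T(N(0,I_k), N(v, I_k))$ between two multivariate Gaussians with common identity covariance depends only on $\|v\|_2$: the most powerful test at level $\alpha$ is a half-space orthogonal to $v$, and its power is $\Phi(\Phi^{-1}(1-\alpha) - \|v\|_2)$. Applying this to $v = (\mu_1, \mu_2)$ in $\mathbb{R}^2$ gives $G_{\mu_1} \otimes G_{\mu_2}(\alpha) = \Phi(\Phi^{-1}(1-\alpha) - \sqrt{\mu_1^2+\mu_2^2}) = G_{\sqrt{\mu_1^2+\mu_2^2}}(\alpha)$, and iteration yields the $T$-term version.

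Step (a) is where the real work lies, and I expect it to be the main obstacle. The issue is that in the adaptive setting, $\mathcal{A}_t$ may depend on the outputs $y_1, \ldots, y_{t-1}$ of earlier rounds, so the joint output distribution on neighboring inputs is not literally a product measure. The standard route is to show, for $T=2$, that for any rejection rule $\phi(y_1, y_2) \in [0,1]$ with type-I error $\leq \alpha$ under the neighbor $\mathbf{x}$, one has type-II error at least $(f_1 \otimes f_2)(\alpha)$ under $\mathbf{x}'$; this is proved by conditioning on $y_1$, invoking the pointwise bound $T(\mathcal{A}_2(\mathbf{x}; y_1), \mathcal{A}_2(\mathbf{x}'; y_1)) \geq f_2$, and then applying the $T=1$ bound to the induced marginal test on $y_1$. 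One also needs that $\otimes$ is monotone in each argument so that the pointwise domination passes to the tensor product. Inducting on $T$ then gives the general statement.

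Finally, since each $G_{\mu_t}$ is symmetric and convex (it is a valid trade-off function between Gaussians), the required regularity hypotheses of the general composition lemma are automatic, and we obtain the stated bound $\sqrt{\sum_{t=1}^T \mu_t^2}$-GDP for the composed mechanism. I would simply cite \cite{dong2022gaussian} for the adaptive composition machinery rather than reprove it, and give the tensorization calculation in (b) explicitly, as that is the only place where the Gaussian structure is actually used.
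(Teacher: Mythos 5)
Your outline is correct and is essentially the proof of this result in \cite{dong2022gaussian} (adaptive composition of trade-off functions via the tensor product, plus the Neyman--Pearson computation showing $G_{\mu_1}\otimes G_{\mu_2}=G_{\sqrt{\mu_1^2+\mu_2^2}}$); the paper itself supplies no proof and simply imports the theorem from that reference, so there is nothing to compare against beyond noting that your sketch matches the canonical argument.
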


\subsection{Gaussian Processes (GPs)}

A Gaussian Process $\mathcal{GP}(m, k)$ is a stochastic process on $\Theta \subset \mathbb{R}^d$, characterized by a mean function $m: \Theta \to \mathbb{R}$ and a kernel function $k: \Theta \times \Theta \to \mathbb{R}$. If $f \sim \mathcal{GP}(m, k)$, then for any finite collection of points $\mathcal{D} = (\theta_1, \dots, \theta_b) \in \Theta^b$, the vector of evaluations $f(\mathcal{D}) := (f(\theta_1), \dots, f(\theta_b))^\top$ is jointly Gaussian:
\[
    f(\mathcal{D}) \sim \mathcal{N}\!\left(m(\mathcal{D}),\, k(\mathcal{D}, \mathcal{D})\right),
\]
where $m(\mathcal{D}) := (m(\theta_1), \dots, m(\theta_b))^\top \in \mathbb{R}^b$ is the mean vector, and we overload $k(\mathcal{D}, \mathcal{D}) \in \mathbb{R}^{b \times b}$ to denote the kernel matrix with entries $k(\mathcal{D}, \mathcal{D})_{ij} = k(\theta_i, \theta_j)$.

Conditioning preserves the GP structure: given function observations at points $\mathcal{D} \in \Theta^b$, the posterior is $f \mid (\mathcal{D}, f(\mathcal{D})) \sim \mathcal{GP}(m_\mathcal{D}, k_\mathcal{D})$, with
\[
\begin{aligned}
m_\mathcal{D}(\theta) &= m(\theta) + k(\theta, \mathcal{D})\, k(\mathcal{D}, \mathcal{D})^{-1} \bigl(f(\mathcal{D}) - m(\mathcal{D})\bigr), \\
k_\mathcal{D}(\theta, \theta') &= k(\theta, \theta') - k(\theta, \mathcal{D})\, k(\mathcal{D}, \mathcal{D})^{-1}\, k(\mathcal{D}, \theta').
\end{aligned}
\]

Finally, our analysis uses the fact that if $f \sim \mathcal{GP}(m, k)$ with a twice-differentiable kernel, then $f$ and its gradient $\nabla f$ form a joint Gaussian Process. In particular, suppose that at a collection of points $\mathcal{D} \in \Theta^b$ we observe noisy values $\mathbf{y} := f(\mathcal{D}) + \boldsymbol{\varepsilon}$ with $\boldsymbol{\varepsilon} \sim \mathcal{N}(0, \sigma^2 I)$. Then for any $\theta \in \Theta$, the joint distribution of $\mathbf{y}$ and $\nabla f(\theta)$ is multivariate Gaussian:
\[
\begin{pmatrix}
    \mathbf{y} \\
    \nabla f(\theta)
\end{pmatrix}
\sim \mathcal{N}\!\left(
\begin{pmatrix}
    m(\mathcal{D}) \\
    \nabla m(\theta)
\end{pmatrix},
\;\Sigma(\mathcal{D}, \theta)
\right),
\]
with
\[
\Sigma(\mathcal{D}, \theta) =
\begin{bmatrix}
    k(\mathcal{D}, \mathcal{D}) + \sigma^2 I 
      & k(\mathcal{D}, \theta)\,\nabla^\top \\
    \nabla k(\theta, \mathcal{D}) 
      & \nabla k(\theta, \theta)\,\nabla^\top
\end{bmatrix},
\]
where $k(\theta, \mathcal{D})$ denotes the row vector with $j$'th entry $k(\theta, \theta_j)$ for $j \in [b]$, and $\nabla k(\theta, \theta)\nabla^\top$ denotes the matrix with $ij$'th entry $\partial_{\theta_i} \partial_{\theta'_j}k(\theta, \theta')\mid_{\theta' = \theta}$ for $i,j \in [d]$.
\subsection{Problem setting}
Let $\mathcal{X}$ be a data space containing sensitive user data, and let $\Theta \subset \mathbb{R}^d$ be a convex, compact hyperparameter search space. We aim to minimize a validation function $f_{\mathbf{x}}: \Theta \to \mathbb{R}$ of the form
\begin{equation} \label{eqn::optimization-function}
    f_{\mathbf{x}}(\theta) = \tfrac{1}{n}\textstyle\sum_{i=1}^n \mathcal{L}(\theta, x_i),
\end{equation}
where $\mathbf{x} := (x_1, \ldots, x_n) \in \mathcal{X}^n$ is the data of $n$ users and $\mathcal{L}(\cdot, x): \Theta \to \mathbb{R}$ is a \emph{black box} function: given $\theta \in \Theta$, we can only query noisy evaluations $y_i := \mathcal{L}(\theta, x_i) + \varepsilon_i$ with $\varepsilon_i \overset{\text{iid}}{\sim} \mathcal{N}(0, \sigma^2)$, and we \textit{do not} have access to the gradient $\nabla_\theta \mathcal{L}$. The additive Gaussian noise models randomness in the observed output, e.g.\ from differentially private training.

The black box objective functions involving user data are often of the form given in \eqref{eqn::optimization-function}.  For instance, this arises in hyperparameter tuning, where \( \mathcal{L}(\theta, x_i) \) represents the validation loss of the \(i\)-th user in the validation set. Examples of this include the regularization parameters in elastic net regression, the regularizer and kernel length scales in kernel Support Vector Machines (SVMs), and the group regularization terms in group LASSO, as we elaborate on in \Cref{sec:motivating-example}.

While hypergradients can sometimes be computed in smooth bilevel problems, many practical tuning pipelines expose only validation losses at chosen hyperparameter values. In examples such as kernel SVMs and group LASSO, nonsmooth losses, constraints, and active-set changes can make exact differentiation through training unreliable or impractical. We therefore focus on the black box regime, in which the algorithm queries possibly noisy user-level validation losses rather than gradients.

While hyperparameter tuning is our focus, the form \eqref{eqn::optimization-function} also arises in (federated) reinforcement learning, where the objective is expected cumulative reward under a given policy \citep{muller2021local, qi2021federated}, and in personalized federated learning, where it measures average post-adaptation performance of a shared model initialization across users \citep{fallah2020-personalized-federated}.

\subsection{Motivating example: Private hyperparameter tuning in group LASSO}
\label{sec:motivating-example}

To make the problem setting concrete, consider a group LASSO regression problem \citep{yuan2006model} in which a practitioner 
wishes to tune the regularization strength applied to each feature group while assuming the training procedure used to compute $\widehat w(\theta)$ is public or separately privatized, while preserving the privacy of the validation users used for hyperparameter selection. Given a design matrix $X^{\text{train}} \in \mathbb{R}^{n_\text{train} \times p}$ whose columns are partitioned into $d$ disjoint groups $G_1, \dots, G_d$, and a response vector $y^{\text{train}} \in \mathbb{R}^{n_\text{train}}$, the group LASSO estimator solves 
\begin{equation*}
    % \label{eqn::group-lasso}
    \hat{w}(\theta) \in \argmin_{w \in \mathbb{R}^p} \tfrac{1}{2n_\text{train}}\|y^{\text{train}} - X^{\text{train}}w\|_2^2 + \textstyle\sum_{k=1}^{d} \theta_k \|w_{G_k}\|,
\end{equation*}
where $\theta = (\theta_1, \dots, \theta_d) \in \Theta \subset \mathbb{R}_{>0}^d$ assigns one regularization parameter per group \citep{simon2013sparse, obozinski2011group}.

In a typical hyperparameter tuning workflow, one selects $\theta$ by minimizing the validation loss on a held-out validation set, disjoint from the training data used to fit $\hat{w}(\theta)$. This corresponds exactly to the objective in \eqref{eqn::optimization-function} with
\[
    \mathcal{L}(\theta, x_i)
    \equiv
    \mathcal{L}(\theta, (x_i^{\text{val}}, y_i^{\text{val}}))
    =
    \tfrac{1}{2}\bigl(y_i^{\text{val}} - \langle x_i^{\text{val}},\, \hat{w}(\theta) \rangle\bigr)^2,
\]
where $i \in [n]$ indexes the $n$ validation users and $(x_i^{\text{val}}, y_i^{\text{val}})$ is the $i$-th user's feature-target pair.

The map  $\theta\mapsto \widehat w(\theta)$ is defined implicitly through a nonsmooth inner optimization problem, and changes in $\theta$ can alter the active set of selected groups. Differentiating the resulting validation loss through a generic solver is therefore nontrivial, motivating a black box treatment in which $\mathcal L(\cdot,x_i)$ is queried only through possibly noisy evaluations.
\subsection{Related work}
\paragraph{Private Hyperparameter Tuning \& BO.} There have been two primary lines of work on hyperparameter tuning under Differential Privacy constraints. The first of these lines \citep{liu2019private, mohapatra2022role, papernot2021hyperparameter}, uses a random number of iteration and a uniformly random search over the hyperparameters to optimize the hyperparameter configuration at a fixed privacy cost. Recently, \citet{wang2023dp} extended this line of work to the adaptive setting, shedding the need for a uniformly random search over the parameter space. 
As the dimension of the domain increases, however, the class of sampling distributions that can achieve a small privacy cost in this algorithm approaches the uniform distribution, preventing scalability.

The second line of work, introduced by \citet{kusner2015differentially}, explicitly adapts a Bayesian optimization procedure to the differentially private setting. As the adapted procedure, Upper Confidence Bound (UCB) BO, is a global method, it also does not scale well in the number of optimizable parameters.
This work has been followed by a diverse literature on differentially private Bayesian optimization, introducing variations to the privacy structure \citep{kharkovskii2020private, zhou2021local, dai2021differentially}. Furthermore, there is a broad literature on privatizing common Multi-Armed Bandit algorithms, which addresses a similar problem as Bayesian optimization but treats candidate hyperparameters as independent arms \citep{tossou2016algorithms, shariff2018differentially,  ou2024thompson}.

\paragraph{Bayesian Optimization.} There have been many recent advances in adapting Bayesian optimization techniques to high-dimensional spaces. \citet{eriksson2021high} introduce SAASBO, which identifies sparse axis-aligned subspaces to optimize over, while \citet{hvarfner2024vanillaicml} addresses the curse of dimensionality by placing an increasing prior on the surrogate GP's lengthscale. These approaches both assume that the high-dimensional function is not as complex as the dimensionality would suggest. Using a zeroth-order Bayesian optimization framework to perform approximate gradient descent along black box functions is another recent idea that was introduced by \cite{muller2021local}. 
Certain properties of this approach were proved in \citet{wu2024behavior}, under the assumption that the black box function is a draw from the GP. 

\paragraph{Noisy Gradient Descent.}On the other hand, gradient descent algorithms are among the algorithms that have received the most attention in the Differential Privacy literature, when gradients are known or practically computable (in \textit{contrast} to the setting of this work). Noisy (Stochastic) Gradient Descent algorithms are among the most widely used differentially private empirical risk minimizers \citep{privateSGD, privateGD3,abadietal2016,iyengaretal2019,privateGD2, songetal2021}. Recent works such as \cite{avella2023differentially,ganeshetal2023,yu:zhao:zhou2024} provide statistical rates for private noisy first- and second-order methods. 

\section{Differentially Private Gradient Informative Bayesian Optimization}
\label{DPLBO}

\subsection{Algorithm}
Our procedure, detailed in Algorithm \ref{algorithm}, uses Bayesian optimization to approximate the underlying function gradient by the gradient of the surrogate model, performs an appropriate clipping step, adds noise, and performs gradient descent along this direction. The Bayesian optimization step consists of sampling and evaluating points that minimize a so-called \textit{acquisition function} over the domain in order to extract as much information relevant to the optimization procedure as possible. The acquisition function we use in this Bayesian optimization framework is the following:
\begin{equation}
    \alpha(\mathbf{z}; \mathcal{D}, \theta) = \text{Tr}\Bigl( \nabla k(\theta, \theta)\nabla^\top - \nabla k(\theta, \mathcal{D} \cup \mathbf{z}) \bigl( k(\mathcal{D} \cup \mathbf{z}, \mathcal{D} \cup \mathbf{z}) + \sigma^2 I \bigr)^{-1} k(\mathcal{D} \cup \mathbf{z}, \theta)\,\nabla^\top \Bigr),
\end{equation}
where $\mathbf{z}$ is a set containing points in $\Theta$.
Intuitively, this acquisition function aims to minimize the current step's uncertainty in the posterior gradient covariance, given (potentially noisy) new observations. 
Thus, choosing points $\mathbf{z}$ that minimize this acquisition function implies choosing points that minimize the uncertainty in the gradient approximation at the point $\theta$. In this sense, it can also be understood as a greedy gradient bias minimizer. 
\begin{algorithm}[t]
\caption{Differentially Private Local Bayesian Optimization}
\begin{algorithmic}[1]
\STATE \textbf{Input:} Loss functions $\mathcal{L}(\cdot, x_i)$, clipping constant $B$, error tolerance $\varepsilon$, objective function noise variance $\sigma^2$, step size $\eta$, and privacy parameter $\mu$
\STATE Initialize at a non-data dependent $\theta^{(0)} \in \Theta$, initialize evaluation set $\mathcal{D}_{-1} = \emptyset$
\FOR{$t = 0, \dots, T-1$}
    \STATE $b = \min \left \{ b' \in \mathbb{N}^+: \min_{\mathbf{z}: |\mathbf{z}| = b'} \alpha(\mathbf{z}; \mathcal{D}_{t-1}, \theta^{(t)}) \leq \varepsilon \right \}$ \hfill \textcolor{gray}{\textit{// gradient approximation start}}
    \STATE Find the set of $b$ points minimizing the acquisition function,
    \[\mathbf{z} = \arg\min_{\mathbf{z}': |\mathbf{z}'| = b} \alpha(\mathbf{z}'; \mathcal{D}_{t-1}, \theta^{(t)}),\]
    and obtain noisy evaluations $\mathbf{y}_i^{(t)} \in \mathbb{R}^b$ with $(\mathbf{y}_i)_j^{(t)} = \mathcal{L}(\mathbf{z}_j, x_i) + \varepsilon_{ij}$, $\varepsilon_{ij} \sim \mathcal{N}(0, \sigma^2)$ 
    \STATE Update $\mathcal{D}_{t} = \mathcal{D}_{t-1} \cup \mathbf{z}$
    \STATE Compute the per-user surrogate posterior mean gradient at $\theta^{(t)}$: \hfill \textcolor{gray}{\textit{// approximation end}}
    \[
    g_t^{(i)} := \nabla k(\theta^{(t)},\mathcal{D}_t) \left(k(\mathcal{D}_t, \mathcal{D}_t) + \sigma^2 I\right)^{-1}[(\mathbf{y}_i^{(0)})^\top, \ldots, (\mathbf{y}_i^{(t)})^\top]^\top
    \] 
    \STATE Clip and aggregate to form an approximate gradient: \hfill \textcolor{gray}{\textit{// per-user clipping}}
    \[
        g_t := \tfrac{1}{n} \textstyle\sum_{i=1}^n g_t^{(i)} \cdot \min\bigl\{1,\, \tfrac{B}{\|g_t^{(i)}\|}\bigr\}
    \]
    \STATE Draw $w_t \overset{\text{iid}}{\sim} \mathcal{N}(0, I)$ and perform the noisy descent step: \hfill \textcolor{gray}{\textit{// privacy noise injection}}
    \[
        \theta^{(t+1)} = \theta^{(t)} - \eta \!\left(g_t + \tfrac{2B\sqrt{T}}{n\mu}w_t\right)
    \]
\ENDFOR
\STATE \textbf{Return} $\theta^{(T)}$
\end{algorithmic}
\label{algorithm}
\end{algorithm}

In each iteration of the algorithm, our algorithm chooses the minimum number of required points for evaluation necessary to push the gradient bias below some specified threshold $\varepsilon$. It then uses the noisy function evaluations at this point and all previously evaluated points to approximate the gradient at that iterate for all validation users, clips and aggregates these, and performs a step of noisy gradient descent along this approximate gradient descent direction. 

We draw particular attention to Line 4, which did not appear in earlier GIBO implementations \citep{muller2021local, wu2024behavior} and carries multiple advantages.
Computationally, Line 4 is an optimization over the kernel function alone, requiring no new model training or validation.
Since posterior variance reduction in GP design is approximately submodular, we solve Lines 4--5 greedily: we incrementally grow the batch size $b$ (binary search is also possible), warm-start from the previous solution, and refine via L-BFGS.
Statistically, Line 4 controls the gradient bias $\|g_t - \nabla f_\mathbf{x}(\theta^{(t)})\|$, which is what enables (i) rigorous convergence guarantees and (ii) analysis under local strong convexity.
It also prevents superfluous function evaluations, speeding up optimization; when each evaluation requires a DP training routine, this further reduces the training privacy overhead. 
\subsection{Theoretical Guarantees}\label{section:noisy}
Without making any assumptions about our data or parameters, we can state the following privacy guarantee for Algorithm \ref{algorithm}.

\begin{theorem}
\label{thm:privacyguarantee}
    The vector $(\theta^{(1)}, \dots , \theta^{(T)})$ released by Algorithm \ref{algorithm} is $\mu$-GDP. In particular, the algorithm's final estimate $\theta^{(T)}$ is $\mu$-GDP.
\end{theorem}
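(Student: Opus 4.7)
The proof proceeds by framing each iteration of Algorithm~\ref{algorithm} as a single application of the Gaussian Mechanism (Theorem~\ref{gaussianmechanism}) acting on the clipped per-user gradient aggregate, and then invoking composition (Theorem~\ref{composition}) across the $T$ iterations. The released vector $(\theta^{(1)},\dots,\theta^{(T)})$ is a deterministic post-processing of these noisy aggregates together with the acquisition-function-selected evaluation locations, the latter of which depend on the data only through previous iterates and hence contribute no additional privacy cost.

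First I would isolate where the sensitive data enter the algorithm. Within a single iteration $t$, conditional on $\theta^{(t)}$ and $\mathcal{D}_{t-1}$ (both functions of data released in earlier iterations), the selection of $b$ and of the evaluation batch $\mathbf{z}$ in lines $4$--$5$ depends only on the kernel and on $\theta^{(t)}$, not on the raw data. The data only enter through the noisy evaluations $\mathbf{y}_i$, which are then used to form the per-user gradient $g_t^{(i)}$ and ultimately the clipped aggregate $g_t$. Thus the only primitive that must be certified differentially private in iteration $t$ is the release of $g_t + \tfrac{2B\sqrt{T}}{n\mu}w_t$.

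Next I would bound the $L_2$-sensitivity of $g_t$ as a function of the dataset. For neighboring datasets $\mathbf{x},\mathbf{x}'$ differing in one user (say index $j$), every term in the average $\tfrac{1}{n}\sum_{i=1}^n g_t^{(i)}\cdot\min\{1,B/\|g_t^{(i)}\|\}$ is unchanged for $i\neq j$, because $g_t^{(i)}$ is a linear function of $\mathbf{y}_i$ alone. The $j$-th clipped summand has norm at most $B$ on either dataset, so the two summands differ in norm by at most $2B$. Dividing by $n$ gives $GS(g_t)\leq 2B/n$. Applying Theorem~\ref{gaussianmechanism} with noise scale $\tfrac{2B\sqrt{T}}{n\mu}=\tfrac{GS(g_t)}{\mu/\sqrt{T}}$, the release of $g_t+\tfrac{2B\sqrt{T}}{n\mu}w_t$ is $(\mu/\sqrt{T})$-GDP; since $\theta^{(t+1)}$ is a deterministic function of $\theta^{(t)}$ and this release, by post-processing the mapping from data to $\theta^{(t+1)}$ (given everything released prior) is also $(\mu/\sqrt{T})$-GDP.

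Finally I would string the iterations together via Theorem~\ref{composition}. Treating iteration $t$ as a mechanism $\mathcal{A}_t$ that, given the history $(\theta^{(1)},\dots,\theta^{(t)})$, outputs $\theta^{(t+1)}$, we have each $\mathcal{A}_t$ being $(\mu/\sqrt{T})$-GDP in the sense of adaptive composition. The $T$-fold composition is therefore $\sqrt{\sum_{t=1}^T (\mu/\sqrt{T})^2}=\mu$-GDP, which gives the claim for the full vector $(\theta^{(1)},\dots,\theta^{(T)})$; the guarantee for $\theta^{(T)}$ alone follows from post-processing. The main subtlety, and the only step requiring care, is the sensitivity calculation: one must verify that clipping is performed \emph{per user before averaging} (so that a single-user change perturbs only one summand of norm at most $B$), and that the adaptive selection of $b$ and $\mathbf{z}$ in lines $4$--$5$ is indeed data-free once the previous iterates are conditioned on, so that no additional privacy budget is consumed there.
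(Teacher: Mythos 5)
Your proposal is correct and follows essentially the same route as the paper's own proof: isolating the clipped, averaged per-user gradient as the sole data-dependent release, bounding its sensitivity by $2B/n$, applying the Gaussian mechanism to get $\mu/\sqrt{T}$-GDP per iteration, and composing over $T$ rounds. The points you flag as requiring care (per-user clipping before averaging, and the data-freeness of lines 4--5 given prior outputs) are exactly the ones the paper's argument relies on.
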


We now turn to analyzing the utility of our algorithm when $\sigma^2>0$, as is the case, for example, when training is done differentially privately by introducing a surrogate Gaussian Process. In this setting, $g_t^{(i)}$ is the posterior mean of validation user $i$'s validation loss gradient, and the trace of the posterior covariance controls how large the deviation of the true gradient can be from this estimate.  In Appendix \ref{section:convergencenoiseless} we analyze the noiseless case using RKHS interpolation. As we have $n$ validation functions, we assume that each of them is distributed as a Gaussian Process.
\begin{assumption}\label{assumption:GP}
    The functions $\mathcal{L}(\cdot, x_i)$ satisfy $\mathcal{L}(\cdot, x_i) \overset{iid}\sim \mathcal{GP}(\mathbf{0}, k)$,
    where $k$ is the kernel used in Algorithm \ref{algorithm}, and we assume that $k$ is non-constant, positive definite, stationary, isotropic, and $k \in C^{4+\alpha}$ for some $\alpha>0$. 
\end{assumption}
The class of kernels described in Assumption \ref{assumption:GP} is broad and includes the RBF kernel and sufficiently smooth Matern($\nu$) kernels. The kernel assumptions made in $\mathcal{L}(\cdot,x_i)$ automatically imply several appealing properties that we use to prove the convergence guarantees of our Algorithm. In particular, they imply with high probability the $L$-smoothness of the global validation function $f_\mathbf{x}$ in \eqref{eqn::optimization-function}, and the boundedness of the gradients of the local validation functions $\mathcal{L}(\cdot, x_i)$:
\begin{lemma}[Smoothness and bounded gradients]\label{lemma:placeholder}
 Under Assumption \ref{assumption:GP}, given a $0<\delta<1$, there exist $L(\delta), B(\delta) <\infty$ such that with probability at least $1-\delta$,\[ \|\nabla f_\mathbf{x}(\theta) - \nabla f_\mathbf{x}(\theta')\| \leq L(\delta)\|\theta - \theta'\| \quad \text{ and  } \quad\|\nabla \mathcal{L}(\theta, x_i)\| \leq B(\delta)\] for all $\theta, \theta' \in \Theta$ and $i \in [n]$.
\end{lemma}
We \textit{assume} the existence of a local minimum in the interior of our hyperparameter search space.
\begin{assumption}\label{assumption: localminimum}
    The realized sample path $f_\mathbf{x}$ has a local minimum $\theta^* \in \text{int}(\Theta)$, i.e. $\nabla f_\mathbf{x}(\theta^*) = 0$.
\end{assumption}
Notice that we do not need to assume that the local minimum is nondegenerate, i.e. $\nabla^2 f_\mathbf{x}(\theta^*) \succ \mathbf{0}$, because under Assumption \ref{assumption:GP}, Bulinskaya’s lemma implies that $f_\mathbf{x}$ is almost surely Morse on $\Theta$ \cite{adler2007random}. Consequently, the local minimum is almost surely nondegenerate. Assumptions \ref{assumption:GP} and \ref{assumption: localminimum} then imply local strong convexity (LSC) in a ball around $\theta^*$:
\begin{lemma}[LSC]\label{lemma:placeh}
    Under Assumptions \ref{assumption:GP} and \ref{assumption: localminimum}, given a $0< \delta <1$, there exist $\tau(\delta), r(\delta)>0$ such that $B_{r(\delta)}(\theta^*) \subset \Theta$ and, with probability at least $1-\delta$,
    \[
    \langle \nabla f_\mathbf{x}(\theta) - \nabla f_\mathbf{x}(\theta'), \theta - \theta'\rangle \geq \tau(\delta) \|\theta - \theta'\|^2
    \]
    for all $\theta, \theta' \in B_{r(\delta)}(\theta^*)$.
\end{lemma}
Lemmas \ref{lemma:placeholder} and \ref{lemma:placeh} suggest that if a local minimum exists in the interior of $\Theta$, then our GP assumption supports $L$-smoothness, bounded gradients, and local strong convexity with high probability, motivating an analysis under these assumptions. 
We are thus able to state our convergence guarantee for the noisy setting. To this end, we  denote $F_t := f_\mathbf{x}(\theta^{(t)}) - f_\mathbf{x}(\theta^{*})$, the suboptimality gap that Algorithm \ref{algorithm} achieves after $t$ iterations. 

\begin{theorem}
\label{theorem:functionEvalsNoisy}
    Let Assumptions \ref{assumption:GP} and \ref{assumption: localminimum} hold.  Furthermore, let $n = \Omega(\frac{\sqrt{T(d+ \log\frac{T}{\delta})}}{\mu})$, $\eta \leq \frac{1}{L(\delta)}$, $B \geq B(\delta)$, and set the gradient bias threshold $\varepsilon = \varepsilon_\omega = d\sigma \omega^{-\frac{1}{2}}$ for $\omega \in \mathbb{N}$ such that $\varepsilon_\omega \leq \frac{\tau(\delta)^2r(\delta)^2}{20\log\frac{2nT}{\delta}}$.
    Then if $\|\theta^{(0)} - \theta^*\| \leq r(\delta)$, taking $T \asymp \log n$, with probability at least $1-2\delta$,
    \[
        F_T = O\Bigg(\frac{\sqrt{\log n(d + \log \frac{\log n}{\delta})}}{n\mu} + \frac{\sigma d \log {\frac{n}{\delta}}}{\sqrt \omega}\Bigg)
    \]
     using $O(\omega d \log n)$ total function evaluations.
\end{theorem}

The first term in the high-probability suboptimality gap bound of Theorem \ref{theorem:functionEvalsNoisy} is induced by the privacy noise, whereas the second term is induced by the bias of approximating the validation loss function's gradient using a surrogate GP. We note that in many practical applications, $\sigma$ can be expected to decrease with $n$, often at a rate of $\sigma = \tilde O(\frac{\text{poly}(d)}{n})$, for example, when the function evaluation noise is induced by DP training on a training dataset of size $\Omega(n)$, and the validation loss is Lipschitz in the underlying model parameters.  Furthermore, the bound has two implications. First, the computational burden is effectively bounded by the privacy floor: given a privacy cost induced by $n$, $\mu$, and $d$, increasing $\omega$ beyond the point where the final approximation bias is of a smaller order than the privacy error floor comes with no gain to the achieved suboptimality gap. Hence, there is a computational saturation point beyond which additional function evaluations do not improve the order of the final guarantee. On the other hand, if computation is limited to at most $\omega_{\max} d \log n$ total function evaluations, then if ${\sqrt{\omega_{\max} \log n (d + \log \frac{\log n}{\delta})}}{} = o(\mu n\sigma d\log \frac{n}{\delta})$, the privacy error becomes negligible relative to the bias, so privacy carries no asymptotic cost.

\paragraph{Comparison to global DP BO.} \citet{kusner2015differentially} analyze a global differentially private Bayesian optimization procedure over a finite discretization of the search space, and bound the high-probability \emph{global} suboptimality gap after $\omega$ function evaluations. Combining their utility bound with the standard RBF-kernel GP-UCB regret rate yields, in $\mu$-GDP terms, a rate of order
\[
    \tilde{O}\!\left( \frac{e^{(d/2)\log\log\omega}\sqrt{d}}{\sqrt{\omega}} + \frac{d^{3/2}}{\mu} \right),
\]
where the $e^{(d/2)\log\log\omega}$ factor in the first term reflects the curse of dimensionality entering through the GP-UCB information gain. By contrast, DP-GIBO scales polynomially in $d$ up to log factors (\Cref{theorem:functionEvalsNoisy}), at the cost of converging only to a local minimum rather than a global one.

We note that this local-vs-global trade-off is less severe in practice than it might appear: in high-dimensional non-convex optimization, local minima tend to concentrate near the global minimum in objective value \citep{ dauphin2014identifying}. Local methods therefore often recover near-global solutions in regimes where global methods are computationally intractable.

\vspace{-0.5em}%space hack

\section{Examples}
\label{examples}
We now present three numerical experiments to illustrate the behavior of our proposed algorithm and to compare it to other private algorithms and to non-private Bayesian optimization procedures. More detailed discussions about these examples, along with additional studies, can be found in Appendix \ref{appendix:examples}. All experiments were run locally on an Apple Macbook with an M3 Pro Chip with 11 Core CPU and 14 Core GPU and 36GB of Unified Memory.

\subsection{Hyperparameter tuning in group LASSO} \label{example::group-LASSO-main}
We now revisit the group LASSO motivating example of \Cref{sec:motivating-example} as our first empirical setting. \Cref{fig:group-lasso-plots} reports performance across hyperparameter dimensions $d \in \{2, 10, 20\}$, where each method is given the same budget of function evaluations. Differentially private hyperparameter tuning is already challenging for vanilla LASSO, and group LASSO is harder: the per-group regularization parameters expand the hyperparameter space and rule out one-dimensional grid-search baselines. 

At $d=2$ all three methods are competitive: the search space is small enough that uniform sampling finds configurations comparable to those identified by GIBO and DP-GIBO. We note that random search itself is not private by default, as the released hyperparameter is still a function of the validation data. As $d$ grows, random search degrades sharply, while GIBO and DP-GIBO continue to make meaningful progress at $d=10$ and $d=20$ by exploiting local gradient information rather than relying on global coverage. Notably, DP-GIBO tracks its non-private counterpart closely across all dimensions, indicating that the privacy cost of our method does not compound with $d$, as predicted by \Cref{theorem:functionEvalsNoisy}, even though the assumptions of that theorem are not satisfied in this setting.

A natural point of comparison would be DP-HyPO \citep{wang2023dp}, but a public implementation is not currently available and the experiments reported there are limited to low-dimensional spaces (notably 2-D in their main DP-SGD benchmark). Since DP-HyPO can only guarantee strong privacy guarantees in high dimensions for nearly-uniformly random search procedures, we would expect it to perform comparably to random search in the regimes considered here, consistent with the curse of dimensionality observed in \Cref{fig:experiments-plots}(a) for global private methods. 

\subsection{Hyperparameter tuning in Gaussian Process Regression}
\label{example:GP}

In this example, we examine the performance of Algorithm \ref{algorithm} when tuning the kernel length-scales of a Gaussian Process regression with one length-scale per input dimension. We treat the validation loss as a black box function of $\theta \in \mathbb{R}^d$, where $\theta$ collects the per-dimension log-length-scales, and we sweep the dimensionality across $d \in \{2, 5, 10\}$. \Cref{fig:experiments-plots}(a) reports the resulting optimization trajectories: at low dimensions, all three methods are competitive, but as $d$ grows, DP-GIBO continues to make rapid progress while Random Search and Global BO plateau early at higher validation losses.

The Global BO baseline is UCB-BO \citep{srinivas2010gaussian} in its standard, non-private form; making it differentially private would require sampling the next query from a distribution over all observed points with probabilities calibrated to their acquisition values \citep{kusner2015differentially}, so the curves shown here serve as an optimistic lower bound on what a private UCB-BO could achieve.

\subsection{Hyperparameter tuning of a Kernel SVM}
\label{sec:svmtuning}
To assess our method on real data, we tune the hyperparameters of a Kernel SVM on the CT slice regression dataset \citep{relative_location_of_ct_slices_on_axial_axis_206}. This yields a $d = 103$ dimensional optimization problem over 3 regularization parameters and 100 kernel length-scales. As shown in \Cref{fig:experiments-plots}(b), DP-GIBO closely tracks the non-private GIBO baseline even at $\mu = 1$ and clearly outperforms random search throughout. The private suboptimality gap remains uniformly larger than the non-private one, in line with the bounds developed in our theory and reflecting the unavoidable cost of differential privacy.

We omit a Global BO baseline in this experiment: at this dataset size exact GP inference is already costly, and it is unclear how to privatize the standard scalable alternatives such as variational and sparse GP approximations. Moreover, as already seen in \Cref{fig:experiments-plots}(a), Global BO performs no better than random search at high dimensions, so we would not expect it to be competitive in this $d = 103$ setting regardless. The objective here likely has lower-dimensional structure, since only a few of the 100 kernel length-scales are expected to be truly active, and recent high-dimensional BO methods such as SAASBO \citep{eriksson2021high} and the lengthscale-prior approach of \citet{hvarfner2024vanillaicml} are designed to exploit precisely this kind of structure; however, it is currently unknown how to privatize these procedures while preserving their statistical benefits. DP-GIBO sidesteps both issues, since its surrogate is fit only over a small batch of locally sampled points at each iteration and never needs to scale with the dataset size.

\begin{figure*}[h]
  \centering
  \begin{subfigure}[t]{0.66\textwidth}
    \centering
    \includegraphics[width=\textwidth]{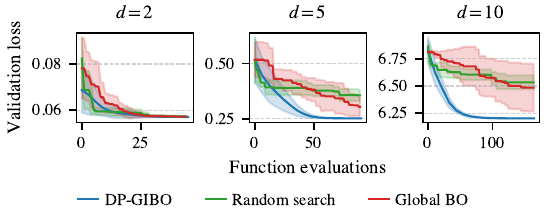}
    \caption{Tuning a GP regression across dimensions $d \in \{2, 5, 10\}$.}
    \label{fig:intro-dims}
  \end{subfigure}
  \hfill
  \begin{subfigure}[t]{0.32\textwidth}
    \centering
    \includegraphics[width=\textwidth]{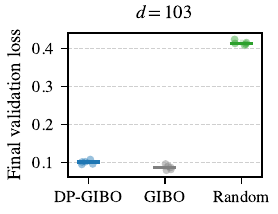}
    \caption{Tuning a Kernel SVM.}
    \label{fig:intro-second}
  \end{subfigure}
  \caption{\textit{Local private BO does not suffer from the curse of dimensionality.} (a) DP-GIBO converges rapidly across all dimensions, while Global BO (UCB-BO) and Random Search plateau at substantially higher losses as $d$ grows. (b) Tuning a Kernel SVM in $d = 103$ dimensions: DP-GIBO closely matches its non-private counterpart GIBO and clearly outperforms Random Search. 
  }
  \label{fig:experiments-plots}
\end{figure*}

\vspace{-0.5em}%space hack
\section{Discussion}
Differentially private hyperparameter tuning is a practical necessity whenever models are trained on sensitive user data, yet the toolbox available to practitioners remains remarkably thin. 
Random search with privacy accounting on the number of trials is essentially the only widely deployed option, and existing private Bayesian optimization approaches are global methods that scale poorly with the number of hyperparameters.
In this work we showed that local Bayesian optimization offers a broadly applicable alternative. 
We introduced DP-GIBO, a private variant of GIBO that scales gracefully with dimension. 
Under suitable conditions, we proved that it converges to a neighborhood of the optimum at a rate with polynomial dependence on dimension. 
Empirically, it outperforms random search and global private BO across a range of synthetic and real-data hyperparameter tuning tasks, even in regimes where our theoretical assumptions are not strictly satisfied. 
We see DP-GIBO as a practical tool that can be dropped into existing tuning pipelines whenever validation losses depend on private user data.
\vspace{-0.3cm} %space hack
\paragraph{Limitations.} 
We require the optimization space to be continuous. 
It would be interesting to see how this approach could be combined with mapping the discrete space to a continuous latent space, such as in \citet{NEURIPS2022_ded98d28}. Furthermore, we emphasize that, as with all local approaches, our algorithm targets a local optimum, contrary to the objective of global approaches. Finally, as in DP-SGD and DP-GD, our algorithm requires a gradient clipping threshold $B$ as input. This can be done heuristically, or one can adapt  low-sensitivity private algorithms for setting $B$ in a data-specific manner such as \citet{andrew2021differentially, bu2023automatic}, which comes at a small additional privacy cost. We conduct a sensitivity study of $B$ in \Cref{sensitivityB}.
\vspace{-0.3cm} %space hack
\paragraph{Future work.}
Finally, we believe there are several potential avenues for further work in the direction of this paper:
$(i)$ Investigating whether the random number of iteration approach used in \citet{wang2023dp} could be adapted to this framework.
$(ii)$ Incorporating a small number of discrete parameters into the optimization problem.
$(iii)$ The development of private analogs of alternative higher-dimensional Bayesian optimization methods such as TuRBO \citep{eriksson2019scalable} and SAASBO \citep{eriksson2021high}.

\bibliography{main}
\bibliographystyle{unsrtnat}

\thispagestyle{empty}

\clearpage
\clearpage

\appendix

\section*{Appendix}
Throughout the appendix, we use the notation \[k_{\mathcal{D}, \sigma^2}(\theta, \theta') := k(\theta, \theta') - k(\theta, \mathcal{D})(k(\mathcal{D}, \mathcal{D}) + \sigma^2I)^{-1} k(\mathcal{D}, \theta').\]

\section{Supporting Lemmas}

\begin{lemma}
\label{lem:projection}
        If $\max_{i \in [n]} \sup_{\theta \in \Theta}\|\nabla \mathcal{L}(\theta, x_i)\| \leq B$, we have,  $\forall i \in [n],t \in [T]$,
        \[
        \left \| \Pi_{B}\left (g_t^{(i)}(\theta^{(t)})\right) - \nabla\mathcal{L}(\theta^{(t)}, x_i) \right \|  \leq \left \| g_t^{(i)}(\theta^{(t)}) - \nabla\mathcal{L}(\theta^{(t)}, x_i) \right \| .
        \]
\end{lemma}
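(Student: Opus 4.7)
The plan is to reduce the lemma to the well-known fact that the Euclidean projection onto a closed convex set is non-expansive. Here the relevant convex set is the closed ball $B_B(0) \subset \mathbb{R}^d$, and the map $\Pi_B(v) = v \cdot \min\{1, B/\|v\|\}$ is precisely the projection onto $B_B(0)$. So the whole proof amounts to applying this fact once, after observing that the true gradient already lies inside the ball.

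The steps, in order, would be as follows. First, invoke Assumption~\ref{assumption:GradientBound} to note that $\|\nabla_\theta \mathcal{L}(\theta^{(t)}, x_i)\| \leq B$, which means $\nabla_\theta \mathcal{L}(\theta^{(t)}, x_i) \in B_B(0)$, and hence $\Pi_B(\nabla_\theta \mathcal{L}(\theta^{(t)}, x_i)) = \nabla_\theta \mathcal{L}(\theta^{(t)}, x_i)$. Second, rewrite the left-hand side as $\|\Pi_B(g_t^{(i)}(\theta^{(t)})) - \Pi_B(\nabla_\theta \mathcal{L}(\theta^{(t)}, x_i))\|$ and apply non-expansiveness of $\Pi_B$ to obtain the desired upper bound $\|g_t^{(i)}(\theta^{(t)}) - \nabla_\theta \mathcal{L}(\theta^{(t)}, x_i)\|$.

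The only mildly non-trivial ingredient is non-expansiveness of $\Pi_B$, but this is elementary and can be dispatched in a few lines by casework on $\|u\|, \|v\|$ relative to $B$, or more abstractly by the characterization $\langle v - \Pi_B(v), w - \Pi_B(v)\rangle \leq 0$ for all $w \in B_B(0)$, which gives $1$-Lipschitzness of the projection operator onto any closed convex set. Since this is a completely standard fact, I would likely just cite it or state it briefly without proof. There is no real obstacle in this argument; the lemma is essentially a sanity check that clipping cannot increase the distance from the true (bounded) gradient, which will then let later proofs treat the clipped surrogate gradient as having error no worse than the unclipped one.
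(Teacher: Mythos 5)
Your proposal is correct and rests on the same underlying mechanism as the paper's proof: $\Pi_B$ is the Euclidean projection onto the closed convex ball $B_B(0)$, the variational inequality $\langle x - \Pi_B(x), \Pi_B(x) - y\rangle \geq 0$ for $y \in B_B(0)$, and Assumption \ref{assumption:GradientBound} placing $\nabla_\theta\mathcal{L}(\theta^{(t)}, x_i)$ inside the ball. The only cosmetic difference is that the paper expands $\|x - y\|^2$ around $\Pi_B(x)$ and drops two nonnegative terms directly, whereas you package the same inequality as non-expansiveness of $\Pi_B$ combined with the fixed-point identity $\Pi_B(\nabla_\theta\mathcal{L}) = \nabla_\theta\mathcal{L}$; both are complete and equally rigorous.
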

\begin{proof}
    First, note that $\Pi_B: \mathbb{R}^d \mapsto \mathbb{R}^d$ is a projection 
    onto a closed convex set, the $L_2$ ball of radius B around the origin. Since $B_B(0)$ is closed and convex, the Euclidean projection $\Pi_B$ is nonexpansive \cite{bauschke2020correction}. By assumption, we have, for all $i \in [n]$, $\theta \in \Theta$,
    \[
    \Pi_B(\nabla \mathcal{L}(\theta, x_i)) = \nabla \mathcal{L}(\theta, x_i),
    \]
    and hence
    \begin{align*}
         \| \Pi_{B}\left (g_t^{(i)}(\theta^{(t)})\right) - \nabla\mathcal{L}(\theta^{(t)}, x_i)  \|  &=   \| \Pi_{B}\left (g_t^{(i)}(\theta^{(t)})\right) - \Pi_B \left(\nabla\mathcal{L}(\theta^{(t)}, x_i) \right) \| \\&\leq \|  g_t^{(i)}(\theta^{(t)})- \nabla\mathcal{L}(\theta^{(t)}, x_i)  \|
    \end{align*}
\end{proof}

\begin{lemma}
\label{lemma:gpboundappendix}
    Admit Assumption \ref{assumption:GP}. Furthermore, let $\theta^{(t)} \in \Theta$ and $\mathcal{D}_t$ be increasing sets containing points in $\Theta$, and $\mathbf{y}_i^{(t)}$ collect the noisy evaluations of $\mathcal{L}(\cdot, x_i)$ at the points in $\mathcal{D}_t$. Then, we have, with probability $1-\xi$, for all $i \in [n]$ and $t \in [T]$,
    \[
        \|\nabla \mathcal L(\theta^{(t)}, x_i) - g_t^{(i)}\|^2 \leq  5 \text{Tr}(\nabla k_{\mathcal{D}_t, \sigma^2}(\theta^{(t)}, \theta^{(t)})\nabla^\top )\log \frac{nT}{\xi}
    \]
\end{lemma}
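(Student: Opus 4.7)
The plan is to exploit the joint Gaussian-process structure of the validation losses: the residual between the true gradient and our posterior-mean gradient estimate is a centered Gaussian vector whose covariance is precisely the posterior gradient covariance, so bounding its squared norm reduces to a standard quadratic-form tail inequality followed by a union bound over the $(i,t)$ pairs.

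First I would combine Assumption \ref{assumption:jointdist} with the joint Gaussianity of $(f,\nabla f)$ recalled in Section 2.3, which is legal because Assumption \ref{assumption:kernel} makes $k$ at least four times continuously differentiable. Conditioning on $\mathcal{D}_t$ and on the noisy observation vector $\mathbf{y}_i^{(t)}$, the standard GP conditioning formula shows that $\nabla \mathcal{L}(\theta^{(t)}, x_i)$ is Gaussian with mean $\nabla k(\theta^{(t)},\mathcal{D}_t)\bigl(k(\mathcal{D}_t,\mathcal{D}_t)+\sigma^2 I\bigr)^{-1}\mathbf{y}_i^{(t)}$ and covariance $\Sigma_{i,t}:=\nabla k_{\mathcal{D}_t}(\theta^{(t)},\theta^{(t)})\nabla^\top$. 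Consequently the residual $V_{i,t}$ has conditional law $N(0,\Sigma_{i,t})$; because this law does not depend on the realized values in the conditioning event, the same distributional identity passes unchanged through the adaptive selection of $\mathcal{D}_t$ and $\theta^{(t)}$ from prior observations via the tower property.

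Second I would bound the squared norm of a centered Gaussian. Writing $V=\Sigma^{1/2}W$ with $W\sim N(0,I_d)$, one has $\mathbb{E}[\exp(t\|V\|^2)] = \det(I-2t\Sigma)^{-1/2}$ for $t<(2\|\Sigma\|_{\mathrm{op}})^{-1}$. A Chernoff bound at $t=(4\|\Sigma\|_{\mathrm{op}})^{-1}$, together with $-\log(1-x)\leq 2x$ valid on $x\leq 1/2$, yields $\Pr\bigl(\|V\|^2 > 2\,\text{Tr}(\Sigma)+4\|\Sigma\|_{\mathrm{op}}\log(1/\delta)\bigr)\leq \delta$. Since $\|\Sigma\|_{\mathrm{op}}\leq \text{Tr}(\Sigma)$ for PSD $\Sigma$, this collapses to a bound of the form $\|V\|^2 \leq c\,\text{Tr}(\Sigma)\log(1/\delta)$ with a modest constant. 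Applying this with $\delta=\xi/(nT)$ to each $(i,t)$ conditionally on the observation history, and taking a union bound over $i\in[n]$ and $t\in[T]$, delivers the advertised inequality.

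The main obstacle I anticipate is matching the specific constant $2$ stated in the lemma: the Chernoff step naturally produces an additive Bernstein-type bound of the form $\text{Tr}(\Sigma)+\|\Sigma\|_{\mathrm{op}}\log(1/\delta)$, and repackaging this as the purely multiplicative expression $2\,\text{Tr}(\Sigma)\log(2nT/\xi)$ requires either a sharper tuning of the Chernoff parameter or a mild lower bound on $\log(2nT/\xi)$ so that the $\text{Tr}(\Sigma)$ term is absorbed into the log term; both calibrations are routine but slightly tedious. A secondary subtlety worth flagging is the adaptive nature of $\theta^{(t)}$ and $\mathcal{D}_t$, but this is handled cleanly by working conditionally and integrating out at the very end.
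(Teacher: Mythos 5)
Your proposal is correct and follows essentially the same route as the paper: conditional Gaussianity of the residual with covariance $\nabla k_{\mathcal{D}_t}(\theta^{(t)},\theta^{(t)})\nabla^\top$, a Chernoff/moment-generating-function bound on the Gaussian quadratic form, and a union bound over the $nT$ pairs. The constant-matching issue you flag is resolved exactly as you anticipate: the paper tunes the Chernoff parameter as $s=\tfrac{1}{2\,\mathrm{Tr}(\Sigma)}$ rather than relative to $\|\Sigma\|_{\mathrm{op}}$, which makes every $2s\lambda_j\leq 1$ and yields the purely multiplicative bound $2\,\mathrm{Tr}(\Sigma)\log\frac{2nT}{\xi}$ directly.
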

\begin{proof}
Fix $i\in[n]$ and $t\in[T]$.  By the Gaussian conditioning formula, conditionally on $\mathcal{F}_t = \sigma(\mathcal{D}_t, \theta^{(t)}, \mathbf{y}_i^{(t)})$, the sigma-algebra generated by the realized design and observations, 
\[
\nabla\mathcal L(\theta^{(t)},x_i)-g_t^{(i)} \mid \mathcal{F}_t
\sim
N\left(0,\nabla k_{\mathcal{D}_t, \sigma^2}(\theta^{(t)},\theta^{(t)})\nabla^\top\right),
\]
where we use that the $\mathcal{L}(\cdot, x_i)$ are independent GPs so that conditioning on $\theta^{(t)}$, which depends on $\{\mathbf y_j^{(s)}\} _{j \in [n], s<t}$ provides no information about $\mathcal{L}(\theta^{(t)}, x_i)$ beyond $(\mathcal{D}_t, \mathbf y_i^{(t)})$.
Applying Proposition 1.1 of \citet{hsu2012tail} with $q=\log(nT/\xi)$ yields
\begin{align*}
\mathbb{P}\Big(
\|\nabla\mathcal L(\theta^{(t)},x_i)-g_t^{(i)}\|^2
&>
\operatorname{Tr}(\nabla k_{\mathcal{D}_t, \sigma^2}(\theta^{(t)},\theta^{(t)})\nabla^\top)
+
2\sqrt{\operatorname{Tr}\big((\nabla k_{\mathcal{D}_t, \sigma^2}(\theta^{(t)},\theta^{(t)})\nabla^\top)^2\big)\log\frac{nT}{\xi}}
\\&+
2\|\nabla k_{\mathcal{D}_t, \sigma^2}(\theta^{(t)},\theta^{(t)})\nabla^\top\|\log\frac{nT}{\xi}
\ |\ \mathcal F_t
\Big)
\leq
\frac{\xi}{nT}.
\end{align*}
Taking expectations gives the same unconditional bound. A union bound over $i\in[n]$ and $t\in[T]$ and bounding $\text{Tr}((\nabla k_{\mathcal{D}_t, \sigma^2}(\theta^{(t)},\theta^{(t)})\nabla^\top)^2) \leq \text{Tr}(\nabla k_{\mathcal{D}_t, \sigma^2}(\theta^{(t)},\theta^{(t)})\nabla^\top)^2$ and $\|\nabla k_{\mathcal{D}_t, \sigma^2}(\theta^{(t)},\theta^{(t)})\nabla^\top\| \leq \text{Tr}(\nabla k_{\mathcal{D}_t, \sigma^2}(\theta^{(t)},\theta^{(t)})\nabla^\top)$ yields the claim.
\end{proof}
Next, we will give an upper bound on the bias of our estimated gradient. 
\begin{lemma}
\label{lem:bias_gradient}
    Under Assumption \ref{assumption:GP}, we have, for all $ t \in [T]$, with probability at least $1-\xi$,
    \[
    \|g_t - \nabla f_\mathbf{x}(\theta^{(t)})\| \leq \sqrt{5 \log\frac{2nT}{\xi}\text{Tr}(\nabla k_{\mathcal{D}_t, \sigma^2} (\theta^{(t)}, \theta^{(t)})\nabla^\top)}
    \]
\end{lemma}
\begin{proof}
    We start by using the triangle inequality to go to the individual gradients, then use Lemma \ref{lem:projection} to remove the projection operator, and lastly apply Lemma \ref{lemma:gpboundappendix}:
    \begin{align*}
        \left \| g_t - \nabla f_\mathbf{x}(\theta^{(t)}) \right \| &= \left \| \frac{1}{n} \sum_{i=1}^n \Pi_B \left ( g_t^{(i)} \right ) - \frac{1}{n} \sum_{i=1}^n \nabla \mathcal{L}(\theta^{(t)}, x_i)  \right \| \\
        & \leq \frac{1}{n} \sum_{i=1}^n \left \| \Pi_B \left ( g_t^{(i)} \right ) - \nabla \mathcal{L}(\theta^{(t)}, x_i)  \right \| \\
        & \leq \frac{1}{n} \sum_{i=1}^n \left \| g_t^{(i)} - \nabla \mathcal{L}(\theta^{(t)}, x_i)  \right \| \\
         &\leq \frac{1}{n} \sum_{i=1}^n \sqrt{5 \text{Tr}(\nabla k_{\mathcal{D}_t, \sigma^2}(\theta^{(t)}, \theta^{(t)})\nabla^\top )\log \frac{nT}{\xi}} \\
    &= \sqrt{5 \text{Tr}(\nabla k_{\mathcal{D}_t, \sigma^2}(\theta^{(t)}, \theta^{(t)})\nabla^\top )\log \frac{nT}{\xi}}
    \end{align*}
\end{proof}

\begin{lemma}
    \label{subGaussianProof}
    Let $w_1, .., w_T \overset{iid}{\sim} N(0, I)$. Then, we have, with probability at least $1-\delta$,
    \[
    \max_{t\leq T} \|w_t\| \leq 4\sqrt{d} + 2\sqrt{2\log\frac{T}{\delta}}
    \]
\end{lemma}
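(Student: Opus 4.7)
The approach is the standard one: control $\|w_t\|_2$ for a single $t$ using Gaussian concentration, then take a union bound over $t = 1, \dots, T$. The key observation is that $\|\cdot\|_2 \colon \mathbb{R}^d \to \mathbb{R}$ is a $1$-Lipschitz function with respect to the Euclidean norm. Therefore, by the Gaussian Lipschitz concentration (Borell--TIS) inequality, for a single $w \sim N(0, I_d)$,
\[
P\bigl(\|w\|_2 \geq E\|w\|_2 + s\bigr) \leq e^{-s^2/2}.
\]
By Jensen's inequality, $E\|w\|_2 \leq \sqrt{E\|w\|_2^2} = \sqrt{d}$, so one obtains $P(\|w\|_2 \geq \sqrt{d} + s) \leq e^{-s^2/2}$.

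Next, apply a union bound over the $T$ independent copies:
\[
P\Bigl(\max_{t \leq T} \|w_t\|_2 \geq \sqrt{d} + s\Bigr) \leq T \, e^{-s^2/2}.
\]
Choosing $s = \sqrt{2 \log(T/\delta)}$ makes the right-hand side equal to $\delta$, yielding with probability at least $1-\delta$ the (tighter) bound $\max_{t \leq T}\|w_t\|_2 \leq \sqrt{d} + \sqrt{2 \log(T/\delta)}$. Since this is strictly smaller than $4\sqrt{d} + 2\sqrt{2 \log(T/\delta)}$, the stated inequality follows \emph{a fortiori}. The generous constants in the statement simply give slack that could absorb a cruder chi-squared-based argument.

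There is essentially no obstacle here; the only judgment call is which tail bound to invoke. If one prefers to avoid Borell--TIS, the Laurent--Massart inequality $P(\|w\|_2^2 \geq d + 2\sqrt{d u} + 2u) \leq e^{-u}$ with $u = \log(T/\delta)$ combined with $2\sqrt{du} \leq d+u$ and $\sqrt{a+b} \leq \sqrt{a}+\sqrt{b}$ yields $\max_{t \leq T} \|w_t\|_2 \leq \sqrt{2d} + \sqrt{3 \log(T/\delta)}$, which is again well inside the stated bound. Either route produces the result in a few lines.
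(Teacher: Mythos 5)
Your proof is correct, and it in fact establishes a strictly stronger bound than the one stated. The route is genuinely different from the paper's, though only in which concentration inequality is invoked for a single $w_t$: the paper cites Theorem~1.19 of Rigollet's lecture notes, which is the general sub-Gaussian norm bound obtained from a $\tfrac12$-net of the unit sphere together with a maximal inequality --- that covering argument is precisely the source of the constants $4\sqrt{d}$ and $2\sqrt{2\log(T/\delta)}$ appearing in the lemma statement. You instead exploit the exact Gaussianity of $w_t$ via Lipschitz concentration (Borell--TIS) and Jensen, obtaining $\sqrt{d}+\sqrt{2\log(T/\delta)}$ and then relaxing \emph{a fortiori}; the union bound over $t\le T$ is identical in both arguments. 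What the paper's approach buys is an off-the-shelf statement that would survive verbatim if the $w_t$ were merely sub-Gaussian with variance proxy $1$ rather than exactly Gaussian; what yours buys is sharper constants and a self-contained derivation. Your Laurent--Massart alternative is also valid and lands comfortably inside the stated bound. Either way, the lemma holds as stated.
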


\begin{proof}
    The proof follows from \citet{rigollet2023high}'s Theorem 1.19 and an application of the union bound.
\end{proof}

\begin{lemma}
\label{lemma:forwarddiff}[Lemma 12 of \citet{wu2024behavior}]
    Admit Assumption \ref{assumption:GP}. Let $\phi$ be the function such that $k(\mathbf{u}, \mathbf{v}) = \phi(\mathbf{u} - \mathbf{v})$. Then, for any $h>0$, and for $m>1$, there exist $2md$ points making up $\mathcal D$ such that
    \[
    \text{Tr}(\nabla k_{\mathcal{D}, \sigma^2}(\mathbf{0}, \mathbf{0})\nabla^\top) \leq -\sum_{j=1}^d\left(\partial_j^2\phi(\mathbf{0}) + \frac{2(\partial _j \phi(-h\mathbf{e}_j))^2}{(1-\phi(2h\mathbf e_j)) + \frac{\sigma^2}{m}}\right),
    \]
where $\mathbf{e}_j$ are the canonical vectors with $1$ in the $j$-th component and $0$ in all others.
\end{lemma}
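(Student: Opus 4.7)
The plan is to choose a specific configuration of $2md$ evaluation points and to reduce the trace bound to $d$ separable two-point conditional variance computations, one per coordinate.

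First, for each $j \in [d]$ I place $m$ copies of $+h\mathbf{e}_j$ and $m$ copies of $-h\mathbf{e}_j$ into $\mathcal{D}$, for $2md$ points in total. The $j$-th diagonal entry of $\nabla k_{\mathcal{D}}(\mathbf{0},\mathbf{0})\nabla^\top$ is the posterior variance of $\partial_j f(\mathbf{0})$ given the noisy observations at $\mathcal{D}$. Since conditioning on additional jointly Gaussian observations only shrinks the posterior covariance in the Loewner order, this entry is bounded above by the posterior variance of $\partial_j f(\mathbf{0})$ given only the $2m$ observations located at $\pm h\mathbf{e}_j$. This monotonicity step decouples the full $2md$-point computation into $d$ per-coordinate ones, whose bounds I will then add up.

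Next I collapse each of the $m$ replicated observations at $\pm h\mathbf{e}_j$ into a single effective observation. Averaging $m$ i.i.d. draws from $f(x) + N(0,\sigma^2)$ at a fixed point is sufficient for $f(x)$ and is distributed as $f(x) + N(0,\sigma^2/m)$, so the posterior variance at the two locations is identical to what one would get from single observations $\bar y_\pm = f(\pm h\mathbf{e}_j) + N(0,\sigma^2/m)$. I am then left with a standard Gaussian conditioning of the scalar $\partial_j f(\mathbf{0})$ on two noisy observations. With $c := \partial_j \phi(-h\mathbf{e}_j)$, differentiating the stationary kernel gives the joint covariance entries $\mathrm{Cov}(\partial_j f(\mathbf{0}), \bar y_+) = c$ and $\mathrm{Cov}(\partial_j f(\mathbf{0}), \bar y_-) = -c$ (the sign flip follows from $\partial_j \phi$ being odd because a stationary positive-definite $\phi$ is even), together with $\mathrm{Var}(\partial_j f(\mathbf{0})) = -\partial_j^2 \phi(\mathbf{0})$, $\mathrm{Var}(\bar y_\pm) = \phi(\mathbf{0}) + \sigma^2/m$, and $\mathrm{Cov}(\bar y_+,\bar y_-) = \phi(2h\mathbf{e}_j)$.

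Inverting the $2 \times 2$ observation covariance and performing the Schur-complement subtraction then yields
\[
\mathrm{Var}\bigl(\partial_j f(\mathbf{0}) \mid \bar y_+, \bar y_-\bigr) = -\partial_j^2 \phi(\mathbf{0}) - \frac{2c^2}{(\phi(\mathbf{0}) - \phi(2h\mathbf{e}_j)) + \sigma^2/m},
\]
which under the usual stationary normalization $\phi(\mathbf{0}) = 1$ matches the $j$-th summand in the claim; summing over $j \in [d]$ finishes the proof. The main obstacle I expect is keeping the signs straight in the final Schur-complement calculation: one has to invoke evenness of stationary $\phi$ to combine the two cross-covariances into a single factor of $2c^2$ in the numerator, and to collapse the two off-diagonal entries of the inverse into the $\phi(\mathbf{0}) - \phi(2h\mathbf{e}_j)$ appearing in the denominator. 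The remaining technicalities, namely existence of the relevant derivatives and invertibility of the $2 \times 2$ system, are guaranteed by Assumption \ref{assumption:kernel}.
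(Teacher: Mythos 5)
Your proposal is correct, and it is worth noting that the paper itself does not prove this statement at all: it is imported verbatim as Lemma~12 of \cite{wu2024behavior}, so your argument is a self-contained derivation rather than a variant of anything in the text. The structure is sound: the Loewner monotonicity of Gaussian conditional covariance legitimately reduces the $2md$-point problem to $d$ independent two-point problems (and is where the inequality, rather than equality, enters); the sufficiency of the average of $m$ replicated noisy observations correctly produces the effective noise $\sigma^2/m$; and the $2\times 2$ Schur complement works out exactly as you claim, since with $A = \phi(\mathbf{0}) + \sigma^2/m$ and $\rho = \phi(2h\mathbf{e}_j)$ the quadratic form $(c,-c)\,\Sigma^{-1}(c,-c)^\top$ collapses to $2c^2/(A-\rho)$, giving $-\partial_j^2\phi(\mathbf{0}) - 2c^2/((\phi(\mathbf{0})-\phi(2h\mathbf{e}_j)) + \sigma^2/m)$ per coordinate. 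Two small points to make explicit in a polished write-up: the statement's ``$1-\phi(2h\mathbf{e}_j)$'' presupposes the normalization $\phi(\mathbf{0})=1$ (the same convention the paper adopts in the proof of Lemma~\ref{lemma:noisyfevalbound}), so you should state that you adopt it rather than call it ``usual''; and the invertibility of the observation covariance despite the $m$-fold repeated points is guaranteed not by Assumption~\ref{assumption:kernel} but by $\sigma^2>0$ (which gives $A > 1 \geq |\rho|$) --- this is precisely why the construction is restricted to the noisy regime.
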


\begin{lemma}\label{lemma:noisyfevalbound}
Let Assumption \ref{assumption:GP} hold, and suppose the stationary kernel is normalized so
that $k(\theta,\theta)=1$ for all $\theta\in\Theta$. Let $\sigma>0$ and
$m\in\mathbb N$. Suppose also that, in local coordinates around the point of interest, there exists $\rho>0$ such that $\{\pm h e_j:0<h\leq \rho,\, j\in[d]\}\subset \Theta$. Then
\[
\min_{\mathbf z:|\mathbf z|=2md}
\operatorname{Tr}\!\left(
\nabla k_{\mathbf z,\sigma^2}(\mathbf 0,\mathbf 0)\nabla^\top
\right)
\le
C\sigma d m^{-1/2},
\]
where $C>0$ depends only on the kernel and the local neighborhood, but not on
$d$, $m$, or $\sigma$.
\end{lemma}

\begin{proof}
Here $z$ is understood as a finite design, possibly with repeated points (so it is not a set).
Let
$
\gamma:=\frac{\sigma^2}{m}.
$
By Lemma \ref{lemma:forwarddiff}, for any admissible $h>0$ there exists a design $D$ of size $2md$ such that
\[
\operatorname{Tr}\!\left(
\nabla k_{D,\sigma^2}(\mathbf 0,\mathbf 0)\nabla^\top
\right)
\le
\sum_{j=1}^d A_j(h,\gamma),
\]
where
\[
A_j(h,\gamma)
:=
-\partial_j^2\phi(0)
-
\frac{2(\partial_j\phi(-he_j))^2}
{1-\phi(2he_j)+\gamma}.
\]
Thus it suffices to show that, for a suitable choice of $h$,
\[
A_j(h,\gamma)\le C\sqrt{\gamma} = C\sigma m^{-1/2}
\]
uniformly over $j\in[d]$. Fix $j\in[d]$ and define 
$
\psi(s):=\phi(se_j).
$
Since $k$ is stationary and isotropic, $\psi$ does not depend on the choice of $j$. Moreover, $\psi$ is even and belongs to $C^4$ by Assumption \ref{assumption:GP}. Moreover, since we assume the kernel is normalized, we have $\psi(0)=1$.

Now let
$
a:=-\psi''(0).
$
By Bochner's theorem (e.g. Theorem 6.6 in \citet{wendland2004scattered}), a stationary positive
definite kernel admits the representation
\[
\phi(t)=\int_{\mathbb R^d} e^{i\langle \omega,t\rangle}\,d\nu(\omega)
\]
for a finite nonnegative spectral measure $\nu$. Since $\phi \in C^2$ at the origin, the spectral measure $\nu$ has finite coordinate second moments, and
\[
\partial_j^2\phi(0)
=
-\int_{\mathbb R^d}\omega_j^2\,d\nu(\omega).
\] Moreover, since $k$ is nonconstant, we know that $\nu\Big\{\mathbb{R}^d \backslash \{\mathbf{0}\} \Big\}>0$, so that $\int_{\mathbb{R}^d} \|\omega\|^2 d\nu(\omega)>0$, i.e. $\nu$ is not concentrated at the origin. Finally, by our assumed isotropy of $k$, $\nu$ is rotationally invariant so that $\int_{\mathbb{R}^d}\omega_j^2 d\nu(\omega) = \frac{1}{d} \int_{\mathbb{R}^d} \|\omega\|^2 d\nu(\omega)>0$ for every $j \in [d]$. Hence
\[
\partial_j^2\phi(0)
=
-\int_{\mathbb R^d}\omega_j^2\,d\nu(\omega)
<0,
\]
so $a=-\psi''(0)>0$. Now, we have
\[
\partial_j^2\phi(0)=\psi''(0)=-a, \qquad \partial_j\phi(-he_j)=\psi'(-h),
\]
and because $\psi$ is even, we have
\[
\psi'(0)=0, \qquad \psi^{(3)}(0)=0.
\]
Therefore, as $h\to 0$, a Taylor expansion yields
\[
\psi(2h) = 1-2ah^2+O(h^4),
\]
and
\[
\psi'(-h) = ah+O(h^3).
\]
Therefore,
\[
1-\phi(2he_j)+\gamma = 1-\psi(2h)+\gamma = 2ah^2+\gamma+O(h^4),
\]
and
\[
2(\partial_j\phi(-he_j))^2 = 2(\psi'(-h))^2
= 2a^2h^2+O(h^4).
\]
Plugging these expansions into $A_j(h,\gamma)$ gives
\[
\begin{aligned}
A_j(h,\gamma)
&=
a- \frac{2a^2h^2+O(h^4)} {2ah^2+\gamma+O(h^4)}  \\
&=
\frac{a\gamma+O(h^4)} {2ah^2+\gamma+O(h^4)}.
\end{aligned}
\]

Now we may choose $h_0>0$ small enough that $h_0\le \rho$ and, for all
$0<h\le h_0$,
\[
2ah^2+\gamma+O(h^4)\ge ah^2+\gamma,
\]
which we can do because the $O(h^4)$ term is dominated by $ah^2$ for small enough $h$. To conclude, first suppose that $\gamma\leq h_0^4$. In this case set
$
h=\gamma^{1/4}.
$
Then $h\leq h_0$, $h^2=\sqrt{\gamma}$, and $h^4=\gamma$. Hence
\[
A_j(h,\gamma) \leq C\frac{\gamma}{\sqrt{\gamma}}
=
C\sqrt{\gamma}.
\]
All that remains to handle is the case $\gamma>h_0^4$. In this case, choose the fixed admissible value $h=h_0$. Since the second term in the definition of
$A_j(h,\gamma)$ is nonnegative, we have
$
A_j(h_0,\gamma)\leq a.
$
But $\gamma>h_0^4$ implies $\sqrt{\gamma}>h_0^2$, and, rearranging this inequality and multiplying both sides by $a$, we have
$a\leq \frac{a}{h_0^2}\sqrt{\gamma}.$
Thus, after increasing the constant if necessary, we obtain
\[
A_j(h,\gamma)\le C\sqrt{\gamma}
\]
for all $\gamma>0$. Therefore, for every $j\in[d]$,
\[
A_j(h,\gamma) \leq C\sqrt{\gamma} = C\sigma m^{-1/2}.
\]
Finally, summing over $[d]$ and applying Lemma \ref{lemma:forwarddiff} yields
\[
\min_{z:|z|=2md}
\operatorname{Tr}\!\left(\nabla k_{z,\sigma^2}(0,0)\nabla^\top\right) \leq C d\sqrt{\gamma} = C\sigma d m^{-1/2}.
\]
\end{proof}
\begin{lemma}
\label{lemma:stayinLSC}
Let $f_{\mathbf x}$ be twice continuously differentiable on $\Theta$. Suppose that
$f_{\mathbf x}$ is $L$-smooth on $\Theta$, and that $f_{\mathbf x}$ is $\tau$-locally strongly convex on $ B_r(\theta^*) \subset \Theta$.
Assume also that $\theta^{(0)}\in  B_r(\theta^*)$ and
$\nabla f_{\mathbf x}(\theta^*)=0$. 
Moreover, let the iterates satisfy
\[
\theta^{(t+1)} = \theta^{(t)} - \eta\left( \nabla f_{\mathbf x}(\theta^{(t)})+b_t+n_t \right),
\]
where $\eta\leq 1/L$. Assume that with probability at least $1-\delta$, simultaneously for all $t\leq T$,
\[
\|b_t\|\le \frac{\tau r}{2},
\qquad
\|n_t\|\le \frac{\tau r}{2}.
\]
Then, with probability at least $1-\delta$,
\[
\theta^{(t)}\in  B_r(\theta^*)
\qquad
\forall t\le T.
\]
\end{lemma}

\begin{proof}
We do our analysis on the event that the two bounds
\[
\|b_t\|\le \frac{\tau r}{2},
\qquad
\|n_t\|\le \frac{\tau r}{2}
\]
hold for all $t\le T$. We prove the claim by induction.

The base case holds by assumption, since
$\theta^{(0)}\in  B_r(\theta^*)$. Now assume
$\theta^{(t)}\in  B_r(\theta^*)$. Because
$ B_r(\theta^*)$ is convex and contained in $\Theta$, the full segment
$
\theta^*+s(\theta^{(t)}-\theta^*),
$ is contained in $\Theta$ for all $s \in [0,1]$. Thus we can make use of $L$-smoothness and $\tau$-local strong convexity on this segment. Since $\nabla f_{\mathbf x}(\theta^*)=0$, the fundamental theorem of calculus
gives
\[
\nabla f_{\mathbf x}(\theta^{(t)}) = \int_0^1 \nabla^2 f_{\mathbf x} \left( \theta^*+s(\theta^{(t)}-\theta^*) \right) (\theta^{(t)}-\theta^*)\,ds.
\]
Now define
\[
H_t := \int_0^1 \nabla^2 f_{\mathbf x} \left( \theta^*+s(\theta^{(t)}-\theta^*) \right)\,ds,
\]
so that
\[
\nabla f_{\mathbf x}(\theta^{(t)}) = H_t(\theta^{(t)}-\theta^*).
\]
By $L$-smoothness and $\tau$-local strong convexity on
$ B_r(\theta^*)$, every Hessian on the segment has eigenvalues in $[\tau,L]$. Therefore $H_t$ also has eigenvalues in $[\tau,L]$. Hence, since $\eta\leq 1/L$,
\[
\|I-\eta H_t\| \leq 1-\eta\tau.
\]
Now, writing out our gradient update,
\[
\begin{aligned}
\|\theta^{(t+1)}-\theta^*\|
&=
\left\| (I-\eta H_t)(\theta^{(t)}-\theta^*) - \eta(b_t+n_t) \right\| \\
&\leq (1-\eta\tau)\|\theta^{(t)}-\theta^*\| + \eta\|b_t+n_t\| \\
&\leq (1-\eta\tau)r + \eta\left(\|b_t\|+\|n_t\|\right) \\
&\leq (1-\eta\tau)r + \eta\tau r \\
&=
r.
\end{aligned}
\]
Therefore $\theta^{(t+1)}\in  B_r(\theta^*)$. By induction, we conclude that $\theta^{(t)}\in  B_r(\theta^*)$ for all $t\leq T$.
\end{proof}

\begin{lemma}
\label{lemma:gradientdescentanalysis}
Admit the assumptions and conditions of Lemma~\ref{lemma:stayinLSC}. Suppose
that, on the same high probability event, for all $t\in[T]$,
\[
\|b_t\|\leq C_t \leq \frac{\tau r}{2},
\]
and
\[
\|n_t\|\leq
N_\delta := \frac{2B\sqrt T}{n\mu} \left( 4\sqrt d+2\sqrt{2\log\frac{T}{\delta}}
\right).
\]
Then, with probability at least $1-\delta$, for each $t\in[T]$,
\[
f_{\mathbf x}(\theta^{(t+1)}) \leq f_{\mathbf x}(\theta^{(t)}) - \frac{\eta}{2} \|\nabla f_{\mathbf x}(\theta^{(t)})\|^2 + \frac{\eta}{2}C_t^2 + \eta\mathcal V,
\]
where
$
\mathcal V
:=
\frac{\tau r}{2}N_\delta+\frac12N_\delta^2.
$
In particular, if
\[
n = \Omega\left( \frac{\sqrt{T(d+\log(T/\delta))}}{\mu} \right),
\]
then
\[
\mathcal V = O\left( \frac{\sqrt{T(d+\log(T/\delta))}}{n\mu}
\right).
\]
\end{lemma}
\begin{proof}
We work on the high probability event from Lemma~\ref{lemma:stayinLSC} on which
the iterates (and all line segments between them) remain in the region where $f_{\mathbf x}$ is $L$-smooth and the bounds on $b_t$ and $n_t$ hold simultaneously for all $t\in[T]$.
Fix $t\in[T]$ and write
\[
g_t:=\nabla f_{\mathbf x}(\theta^{(t)}),
\qquad
e_t:=b_t+n_t.
\]
By the update rule,
\[
\theta^{(t+1)}
=
\theta^{(t)}-\eta(g_t+e_t).
\]
Since $f_{\mathbf x}$ is $L$-smooth along the segment joining
$\theta^{(t)}$ and $\theta^{(t+1)}$, we have
\[
f_{\mathbf x}(\theta^{(t+1)})
\le
f_{\mathbf x}(\theta^{(t)})
+
\langle g_t,\theta^{(t+1)}-\theta^{(t)}\rangle
+
\frac{L}{2}
\|\theta^{(t+1)}-\theta^{(t)}\|^2.
\]
Substituting the update gives
\[
\begin{aligned}
f_{\mathbf x}(\theta^{(t+1)})
&\le
f_{\mathbf x}(\theta^{(t)})
-
\eta\langle g_t,g_t+e_t\rangle
+
\frac{L\eta^2}{2}
\|g_t+e_t\|^2.
\end{aligned}
\]
Since $\eta\le 1/L$, we have
$
\frac{L\eta^2}{2}\le \frac{\eta}{2},
$
and hence,
\[
\begin{aligned}
f_{\mathbf x}(\theta^{(t+1)})
&\le
f_{\mathbf x}(\theta^{(t)})
-
\eta\langle g_t,g_t+e_t\rangle
+
\frac{\eta}{2}
\|g_t+e_t\|^2 \\
&=
f_{\mathbf x}(\theta^{(t)})
-
\frac{\eta}{2}\|g_t\|^2
+
\frac{\eta}{2}\|e_t\|^2.
\end{aligned}
\]
Thus,
\[
f_{\mathbf x}(\theta^{(t+1)})
\le
f_{\mathbf x}(\theta^{(t)})
-
\frac{\eta}{2}\|\nabla f_{\mathbf x}(\theta^{(t)})\|^2
+
\frac{\eta}{2}\|b_t+n_t\|^2.
\]
Now,
\[
\frac{\eta}{2}\|b_t+n_t\|^2
\le
\frac{\eta}{2}(C_t+N_\delta)^2
=
\frac{\eta}{2}C_t^2
+
\eta C_tN_\delta
+
\frac{\eta}{2}N_\delta^2,
\]
so that using $C_t\le \frac{\tau r}{2}$,
\[
\frac{\eta}{2}\|b_t+n_t\|^2
\le
\frac{\eta}{2}C_t^2
+
\eta
\left(
\frac{\tau r}{2}N_\delta+\frac12N_\delta^2
\right).
\]
Therefore, with
\[
\mathcal V
:=
\frac{\tau r}{2}N_\delta+\frac12N_\delta^2,
\]
we obtain
\[
f_{\mathbf x}(\theta^{(t+1)})
\le
f_{\mathbf x}(\theta^{(t)})
-
\frac{\eta}{2}\|\nabla f_{\mathbf x}(\theta^{(t)})\|^2
+
\frac{\eta}{2}C_t^2
+
\eta\mathcal V.
\]
Finally,
\[
N_\delta
=
\frac{2B\sqrt T}{n\mu}
\left(
4\sqrt d+2\sqrt{2\log\frac{T}{\delta}}
\right)
=
O\left(
\frac{\sqrt T\sqrt{d+\log(T/\delta)}}{n\mu}
\right).
\]
Under the stated lower bound on $n$, $N_\delta=O(1)$, so
$
N_\delta^2=O(N_\delta).
$
Hence, we conclude
\[
\mathcal V
=
\frac{\tau r}{2}N_\delta+\frac12N_\delta^2
=
O\left(
\frac{\sqrt {T(d+\log(T/\delta))}}{n\mu}
\right).
\]
\end{proof}
\section{Proof of Privacy Result}
\label{appendix:proofs}
\subsection*{Proof of Theorem \ref{thm:privacyguarantee}}
\begin{proof}
    The result follows a standard argument that combines guarantees of the Gaussian mechanism, post-processing and composition.  Let $\mu > 0$ be the desired level of privacy. We will first prove that each iteration of Algorithm \ref{algorithm} is $\frac{\mu}{\sqrt{T}}$-GDP, given the output of the previous iterations. Then our result follows by composition. To that end, take two datasets $x$ and $x'$ such that $d_H(x, x') = 1$. Without loss of generality, we can assume that $x_1 \neq x_1'$ and $x_i = x_i'$ for all $2 \leq i \leq n$. Fix $t > 0$, the current iteration of the algorithm. First note that given the output of the previous iterations, the only further privacy leakage in iteration $t$ is through the computation of the approximate gradients. Indeed, lines 4 and 5 are functions exclusively of the kernel $k$, the evaluation set of the previous iteration $\mathcal D_{t-1}$, and the parameter output of the previous iteration $\theta^{(t)}$. By the post-processing property of Gaussian Differential Privacy, these contribute no further privacy leakage (Proposition 4 of \citet{dong2022gaussian}). Thus, given the output of the previous iterations, the only new user data entering our output at iteration $t$ occurs in line $9$, through $g_t$. Hence, it suffices to show that given $\theta^{(t)}$ and $\mathcal{D}_{t-1}$, line 9 is $\frac{\mu}{\sqrt{T}}$-GDP. To that end, let $\widetilde{\mathbf{y}}_i^{(t)} := [(\mathbf{y}_i^{(0)})^\top, \ldots, (\mathbf{y}_i^{(t)})^\top]^\top$ 
    \[
    g_t^{(i)} := \nabla k(\theta^{(t)}, \mathcal{D})\big(k(\mathcal{D}, \mathcal{D}) + \sigma^2 I\big)^{-1} \widetilde{\mathbf{y}}_i^{(t)},
    \]
    and 
    \[
    \tilde{g}_t^{(i)} := \nabla k(\theta^{(t)}, \mathcal{D})\big(k(\mathcal{D}, \mathcal{D}) + \sigma^2 I\big)^{-1} {\widetilde{\mathbf{y}}_i^{(t)'}}.
    \]
    Our estimated gradients at the $t$-th iteration for the two datasets are as follows
    \[
    g_t =\frac{1}{n} \sum_{i = 1}^n g_t^{(i)} \cdot \min \left \{ 1, \frac{B}{\|g_t^{(i)}\|} \right \},
    \]
    and 
    \[
    \Tilde{g}_t = \frac{1}{n} \sum_{i = 1}^n \Tilde{g}_t^{(i)} \cdot \min \left \{ 1, \frac{B}{\|\Tilde{g}_t^{(i)}\|} \right \}.
    \]
    Hence, by noting that $g_t^{(i)} = \Tilde{g}_t^{(i)}$ for all $2 \leq i \leq n$, we derive that the global sensitivity of the estimated gradient at time $t$ is
    \begin{align*}
        \|g_t - \Tilde{g}_t \| &= \left \| \frac{1}{n}\sum_{i = 1}^n g_t^{(i)} \cdot \min \left \{ 1, \frac{B}{\|g_t^{(i)}\|} \right \} - \frac{1}{n} \sum_{i = 1}^n \Tilde{g}_t^{(i)} \cdot \min \left \{ 1, \frac{B}{\|\Tilde{g}_t^{(i)}\|} \right \} \right \| \\
        &= \frac{1}{n} \left \| g_t^{(1)} \cdot \min \left \{ 1, \frac{B}{\|g_t^{(1)}\|} \right \} -  \Tilde{g}_t^{(1)} \cdot \min \left \{ 1, \frac{B}{\|\Tilde{g}_t^{(1)}\|} \right \}  \right \| \\
        &\leq \frac{1}{n} \left ( \left \| g_t^{(i)} \cdot \min \left \{ 1, \frac{B}{\|g_t^{(i)}\|} \right \} \right \| + \left \| \Tilde{g}_t^{(1)} \cdot \min \left \{ 1, \frac{B}{\|\Tilde{g}_t^{(1)}\|} \right \} \right \| \right ) \\
        &\leq \frac{1}{n} (B + B) = \frac{2B}{n}.
    \end{align*}
    
Hence, since we are adding $\frac{2B\sqrt{T}}{n\mu}w_t$, where $\ w_t \overset{iid}{\sim} N(0, I)$, by Theorem \ref{gaussianmechanism}, every iteration of the algorithm is $\frac{\mu}{\sqrt{T}}$-GDP, given the output of the previous algorithms. Finally, then, since our whole algorithm is a T-fold composition of $\frac{\mu}{\sqrt{T}}$-GDP mechanisms, we conclude that Algorithm \ref{algorithm} is $\mu$-GDP by Corollary \ref{composition}.
\end{proof}

\section{Proofs of Section \ref{section:noisy}}
\subsection*{Proof of Lemma \ref{lemma:placeholder}}
\begin{proof}
    By assumption, $\mathcal{L}(\cdot, x_i) \sim \mathcal{GP}(\mathbf{0},k)$ with $k \in C^{4+\alpha}$ stationary. As a result, $\mathcal{L}(\cdot, x_i)$ admits a modification (which we continue to denote by $\mathcal{L}(\cdot, x_i)$ for simplicity) whose sample paths lie almost surely in $C^2$ \citep{da2026sample}. In particular, since $\Theta$ is compact, the random variables 
    \[
    B_i := \sup_{\theta \in \Theta} \|\nabla \mathcal{L}(\theta, x_i)\|, \quad H_i := \sup_{\theta \in \Theta} \|\nabla^2 \mathcal{L}(\theta, x_i)\|
    \]
    are almost surely finite. Moreover, since we can write $B_i = \sup_{\theta \in \Theta, u \in \mathbb{S}^{d-1}} u^\top \nabla \mathcal{L}(\theta, x_i)$ and $H_i = \sup_{\theta \in \Theta, u, v \in \mathbb{S}^{d-1}} u^\top \nabla^2 \mathcal{L}(\theta, x_i) v$, it follows that these are suprema of centered Gaussian processes. Since $\Theta \times \mathbb S^{d-1}$ and $\theta \times \mathbb S^{d-1} \times \mathbb S^{d-1}$ are compact index sets and the corresponding processes have continuous sample paths, we can apply Borell-TIS \citep{adler2007random}. It follows that the constants
    \[
    m_B := \mathbb{E}[\sup_{\theta \in \Theta} \|\nabla \mathcal{L}(\theta, x_1)\|], \quad \sigma_B^2 := \sup_{\theta \in \Theta, u \in \mathbb S^{d-1}} \text{Var}\big(u^\top \nabla \mathcal{L}(\theta, x_1)\big)
    \]
    and
    \[
    m_H :=  \mathbb{E}[\sup_{\theta \in \Theta} \|\nabla^2 \mathcal{L}(\theta, x_1)\|], \quad \sigma_H^2 := \sup_{\theta \in \Theta, u,v \in \mathbb S^{d-1}} \text{Var}\big(u^\top \nabla^2 \mathcal{L}(\theta, x_1)v \big)
    \]
    are finite by Fernique's Theorem \citep{fernique2006regularite}. By Borell-TIS, it then follows that
    \begin{align}\label{eq:highprob1}
    \mathbb{P}\Big(\max_{i \in [n]} B_i > m_B + \sigma_B\sqrt{2 \log \frac{2n}{\delta}} \Big) \leq \frac{\delta}{2}
    \end{align}
    and
    \begin{align}\label{eq:highprob2}
    \mathbb{P}\Big(\max_{i \in [n]} H_i > m_H + \sigma_H\sqrt{2 \log \frac{2n}{\delta}} \Big) \leq \frac{\delta}{2}.
    \end{align}
    Thus, take $B := m_B + \sigma_B\sqrt{2 \log \frac{2n}{\delta}} $ and $L :=  m_H + \sigma_H\sqrt{2 \log \frac{2n}{\delta}}$. Then, on the event of the intersection of \eqref{eq:highprob1}-\eqref{eq:highprob2}, which holds with probability at least $1-\delta$, we have \[
    \max_{i \in [n]} \sup_{\theta \in \Theta} \|\nabla \mathcal{L}(\theta, x_i)\| \leq B.
    \]
    Also on this event, we have
    \[
    \sup_{ \theta \in \Theta} \|\nabla^2 f_\mathbf{x}(\theta) \| \leq \frac{1}{n} \sum_{i=1}^n \sup_{\theta \in \Theta} \|\nabla^2 \mathcal{L}(\theta, x_i)\|  \leq L, 
    \]
    and now, using convexity of $\Theta$, for $\theta, \theta' \in \Theta$,
    \[
    \nabla f_\mathbf{x}(\theta) - \nabla f_\mathbf{x}(\theta')= \int_0^1 \nabla^2 f_\mathbf{x}(\theta' + t(\theta - \theta'))(\theta - \theta')dt,
    \]
    so that
    \[
    \|\nabla f_\mathbf{x}(\theta) - \nabla f_\mathbf{x}(\theta')\|\leq \|\theta - \theta'\|\int_0^1 \|\nabla^2 f_\mathbf{x}(\theta' + t(\theta - \theta'))\|dt \leq L\|\theta - \theta'\|.
    \]
\end{proof}
\subsection*{Proof of Lemma \ref{lemma:placeh}}
\begin{proof}
By the proof of Lemma \ref{lemma:placeholder}, we know that $f_\mathbf{x}$ has a $C^2$ modification. Moreover, by Bulinskaya's lemma, $f_\mathbf{x}$ is almost surely Morse on $\Theta$ \cite{adler2007random}. Consequently, we have almost surely  $\lambda^* := \lambda_{\min}(\nabla^2 f_\mathbf{x}(\theta^*)) >0$. Then, for $\theta^* \in \text{int}(\Theta)$, almost sure continuity of $\nabla^2 f_\mathbf{x}$ implies that the random variable
\[
R_* := \sup\{s>0: B_s(\theta^*) \subseteq \Theta \text{ and } \lambda_{\min}(\nabla^2f_\mathbf{x}(z)) \geq \frac{\lambda_*}{2} \text{ for all }z \in B_s(\theta^*) \}
\]
satisfies $R_* >0$ almost surely. Because $\lambda_*, R_*>0$ almost surely, we can choose deterministic constants $\tau, r>0$ such that
\[
\mathbb{P}(\lambda_* < 2\tau) \leq \frac{\delta}{2} \quad \text{ and } \quad \mathbb{P}(R_* \leq r) \leq \frac{\delta}{2},
\]
and on the event $E := \{ \lambda_* \geq 2\tau\} \cap \{R_* >r\}$, for every $z \in B_r(\theta^*)$,
\[
\lambda_{\min}(\nabla^2 f_\mathbf{x}(z)) \geq \frac{\lambda_*}{2} \geq \tau.
\]
Hence, $\nabla^2 f_\mathbf{x}(z) \succeq \tau I $ for all $z \in B_r(\theta^*)$. Now fix $\theta, \theta' \in B_r(\theta^*)$. By convexity,
\[
\nabla f_\mathbf{x}(\theta) - \nabla f_\mathbf{x}(\theta') = \int_0^1 \nabla^2 f_\mathbf{x}(\theta' + t(\theta - \theta')) (\theta - \theta')dt,
\]
and hence
\begin{align*}
\langle\nabla f_\mathbf{x}(\theta) - \nabla f_\mathbf{x}(\theta'), \theta - \theta'\rangle &= \int_0^1 \langle\nabla^2 f_\mathbf{x}(\theta' + t(\theta - \theta')) (\theta - \theta'), \theta - \theta'\rangle dt \\
&= \int_0^1 (\theta - \theta')^\top \nabla^2 f_\mathbf{x}(\theta' + t(\theta - \theta')) (\theta - \theta')dt \\
&\geq \int_0^1\tau \|\theta - \theta'\|^2dt = \tau\|\theta - \theta'\|^2.
\end{align*}
Finally, to conclude, since $\theta^* \in \text{int}(\Theta)$, it follows that we can shrink $r$ to ensure $B_{r}(\theta^*) \subset \Theta$, proving the claim.
\end{proof}
\subsection*{Proof of Theorem \ref{theorem:functionEvalsNoisy}}
\begin{proof}
For ease of notation, we write
\[
L:=L(\delta),\qquad \tau:=\tau(\delta),\qquad r:=r(\delta).
\]
We first prove the stated suboptimality bound for \(T\asymp \log n\), and
then bound the number of function evaluations.

To that end, let \(\mathcal E_{\mathrm{reg}}\) denote the event on which \(f_{\mathbf x}\) is twice continuously differentiable on the relevant neighborhood of \(\Theta\), is \(L\)-smooth on \(\Theta\), is \(\tau\)-locally strongly convex on \(B_r(\theta^*)\), and satisfies the gradient boundedness
condition used by the clipping step. By Lemma~\ref{lemma:placeholder},
\[
\mathbb P(\mathcal E_{\mathrm{reg}})\ge 1-\delta.
\]
Next, let \(\mathcal E_{\mathrm{alg}}\) denote the event on which the GP
gradient approximation error and the privacy-noise bound hold simultaneously
for all \(t\le T\). By construction of Line 4 of the algorithm,
\[
\operatorname{Tr}\!\left(
\nabla k_{\mathcal D_t,\sigma^2}
(\theta^{(t)},\theta^{(t)})
\nabla^\top
\right)
\le \varepsilon
\qquad
\forall t\le T.
\]
Therefore, by Lemma~\ref{lemma:gpboundappendix}, with probability at least
\(1-\delta/2\),
\[
\left\|
g_t-\nabla f_{\mathbf x}(\theta^{(t)})
\right\|^2
\le
5\varepsilon\log\frac{2nT}{\delta}
\qquad
\forall t\le T.
\]
Now, as before, define the gradient approximation bias
$
b_t:=g_t-\nabla f_{\mathbf x}(\theta^{(t)}).
$
Then, on this event,
\[
\|b_t\|
\le
C_t
:=
\sqrt{5\varepsilon\log\frac{2nT}{\delta}}
\qquad
\forall t\le T.
\]
By the assumed upper bound on \(\varepsilon\), we have
$
C_t\le \frac{\tau r}{2}.
$
Similarly, by Lemma~\ref{subGaussianProof}, with probability at least
\(1-\delta/2\), the privacy noise satisfies
\[
\|n_t\|
\le
N_\delta
:=
\frac{2B\sqrt T}{n\mu}
\left(
4\sqrt d+2\sqrt{2\log\frac{2T}{\delta}}
\right)
\qquad
\forall t\le T.
\]
The assumed lower bound
\[
n=\Omega\!\left(
\frac{\sqrt{T(d+\log(T/\delta))}}{\mu}
\right)
\]
with a sufficiently large hidden constant implies
$
N_\delta\le \frac{\tau r}{2},
$
so we conclude
\[
\mathbb P(\mathcal E_{\mathrm{alg}})\ge 1-\delta.
\]

We now work on the event
$
\mathcal E:=\mathcal E_{\mathrm{reg}}\cap \mathcal E_{\mathrm{alg}},
$
which has probability at least \(1-2\delta\). On \(\mathcal E\), the
hypotheses of Lemma~\ref{lemma:stayinLSC} are satisfied. Since
\(\|\theta^{(0)}-\theta^*\|\le r\), Lemma~\ref{lemma:stayinLSC} implies
\[
\theta^{(t)}\in B_r(\theta^*)
\qquad
\forall t\le T,
\]
so that we can apply local strong convexity results. 
By Lemma~\ref{lemma:gradientdescentanalysis}, on \(\mathcal E\), for every
\(t\le T\),
\[
f_{\mathbf x}(\theta^{(t+1)})
\le
f_{\mathbf x}(\theta^{(t)})
-
\frac{\eta}{2}
\left\|
\nabla f_{\mathbf x}(\theta^{(t)})
\right\|^2
+
\frac{\eta}{2}C_t^2
+
\eta\mathcal V,
\]
where
\[
\mathcal V
=
O\!\left(
\frac{\sqrt T\sqrt{d+\log(T/\delta)}}{n\mu}
\right).
\]
Substituting the definition of \(C_t\) gives
\[
f_{\mathbf x}(\theta^{(t+1)})
\le
f_{\mathbf x}(\theta^{(t)})
-
\frac{\eta}{2}
\left\|
\nabla f_{\mathbf x}(\theta^{(t)})
\right\|^2
+
\frac{5\eta\varepsilon}{2}
\log\frac{2nT}{\delta}
+
\eta\mathcal V .
\]
Finally, let
\[
F_t:=f_{\mathbf x}(\theta^{(t)})-f_{\mathbf x}(\theta^*).
\]
Since the iterates remain in \(B_r(\theta^*)\), and \(f_{\mathbf x}\) is
\(\tau\)-locally strongly convex on this ball, the Polyak--Lojasiewicz
inequality implied by strong convexity gives
\[
\left\|
\nabla f_{\mathbf x}(\theta^{(t)})
\right\|^2
\ge
2\tau
\left(
f_{\mathbf x}(\theta^{(t)})-f_{\mathbf x}(\theta^*)
\right)
=
2\tau F_t.
\]
Therefore,
\[
\begin{aligned}
F_{t+1}
&\le
F_t
-
\eta\tau F_t
+
\frac{5\eta\varepsilon}{2}
\log\frac{2nT}{\delta}
+
\eta\mathcal V \\
&=
(1-\eta\tau)F_t
+
\frac{5\eta\varepsilon}{2}
\log\frac{2nT}{\delta}
+
\eta\mathcal V .
\end{aligned}
\]
Unrolling this recursion yields
\[
\begin{aligned}
F_T
&\le
(1-\eta\tau)^T F_0
+
\left(
\frac{5\eta\varepsilon}{2}
\log\frac{2nT}{\delta}
+
\eta\mathcal V
\right)
\sum_{s=0}^{T-1}(1-\eta\tau)^s \\
&\le
(1-\eta\tau)^T F_0
+
\frac{5\varepsilon}{2\tau}
\log\frac{2nT}{\delta}
+
\frac{\mathcal V}{\tau},
\end{aligned}
\]
where the final inequality uses a geometric series bound. 
Now choose the hidden constant in \(T\asymp \log n\) large enough so that
\[
(1-\eta\tau)^T F_0
=
O\!\left(
\frac{
\sqrt{\log n\left(d+\log\frac{\log n}{\delta}\right)}
}{
n\mu
}
\right),
\]
and since
\[
\mathcal V
=
O\!\left(
\frac{\sqrt T\sqrt{d+\log(T/\delta)}}{n\mu}
\right),
\]
taking \(T\asymp \log n\) gives
\[
\frac{\mathcal V}{\tau}
=
O\!\left(
\frac{
\sqrt{\log n\left(d+\log\frac{\log n}{\delta}\right)}
}{
n\mu
}
\right).
\]
Thus
\[
F_T
=
O\!\left(
\frac{
\sqrt{\log n\left(d+\log\frac{\log n}{\delta}\right)}
}{
n\mu
}
+
\varepsilon\log\frac{n\log n}{\delta}
\right).
\]

It remains to bound the number of function evaluations needed to achieve the
desired value of \(\varepsilon\). By Lemma~\ref{lemma:noisyfevalbound}, for
\(\omega\in\mathbb N\), one can achieve
\[
\varepsilon
=
O\!\left(d\sigma\omega^{-1/2}\right)
\]
using at most \(O(\omega d)\) function evaluations. By stationarity, the same
local design can be translated to the current iterate \(\theta^{(t)}\).
Moreover, posterior covariance is monotone under adding observations, so the
previously collected design \(\mathcal D_{t-1}\) can only reduce the posterior
gradient covariance. Hence each iteration requires at most \(O(\omega d)\)
new function evaluations.

Since \(T\asymp \log n\), the total number of function evaluations is
$
O(\omega d\log n).
$
Finally,
$
\log\frac{n\log n}{\delta}
=
O\!\left(\log\frac{n}{\delta}\right),
$
so substituting
$
\varepsilon
=
O\!\left(d\sigma\omega^{-1/2}\right)
$
gives
\[
F_T
=
O\!\left(
\frac{
\sqrt{\log n\left(d+\log\frac{\log n}{\delta}\right)}
}{
n\mu
}
+
\frac{\sigma d\log\frac{n}{\delta}}{\sqrt{\omega}}
\right).
\]
This holds on \(\mathcal E\), which has probability at least \(1-2\delta\).
\end{proof}

\section{Additional Results}
\subsection{Convergence Guarantees in the Noiseless Case}
\label{section:convergencenoiseless}
In this section we present the theoretical guarantees for the estimates obtained by Algorithm \ref{algorithm}, if $\sigma^2 =0$. To approximate our gradient, we use minimum norm Reproducing Kernel Hilbert Space interpolation, which requires our function to be an element of $k$'s RKHS.
\begin{assumption}
    \label{assumption:rkhs}
    For all $i \in [n]$, $\mathcal{L}(\cdot, x_i) \in \mathcal{H}:= RKHS(k)$, where $k$ is the kernel used in Algorithm \ref{algorithm}. Moreover, there exists a constant $C_\mathcal{X} < \infty$ such that for all $i \in [n]$, 
    $
    \|\mathcal{L}(\cdot, x_i)\|_\mathcal{H} \leq C_\mathcal{X}
    $
\end{assumption}
Since now the validation functions are deterministic, we assume all ingredients we need for our analysis:
\begin{assumption}\label{assumptions:noiselessall}
    The function $f_\mathbf{x}$ is $L$-smooth on $\Theta$ and the function $f_\mathbf{x}$ has a nondegenerate local minimum $\theta^* \in \text{int}(\Theta)$ such that $\|\theta^{(0)} - \theta^*\| \leq r$, and $f_\mathbf{x}(\theta)$ is $\tau$-LSC in $B_r(\theta^*)$. Finally, $\|\nabla \mathcal{L}(\theta, x_i)\| \leq B$ for all $i \in [n]$, $\theta \in \Theta$. 
\end{assumption}

We are now ready to state Algorithm \ref{algorithm}'s convergence guarantees in the case of $\sigma^2 = 0$.
The following theorem provides a high probability bound on the suboptimality gap $F_T$, at the conclusion of the Algorithm's $T$ iterations.
\begin{theorem}
    \label{thm:Goodconvergence}
    Under Assumptions \ref{assumption:rkhs}-\ref{assumptions:noiselessall}, and if $\varepsilon \leq \frac{\tau^2r^2}{4C_\mathcal{X}^2}$, and $n = \Omega(\frac{\sqrt{T(d + \log\frac{T}{\delta})}}{\mu})$, we have with probability at least $1-\delta$, for $\eta \leq \frac{1}{L}$,
    \begin{align*}
        F_T \leq (1 - \eta \tau)^T F_0 + \frac{\varepsilon C_\mathcal{X}^2}{2\tau} + \frac{\mathcal{V}}{\tau},
    \end{align*}
    where \[
    \mathcal V = O\left(\frac{\sqrt{T(d + \log \frac{T}{\delta})}}{n\mu}\right)
    \]
\end{theorem}
\begin{proof}
    By Lemma 1 in \citet{wu2024behavior}, we have
    \begin{align}\label{eq:rkhsboundap}
    \|\nabla \mathcal L(\theta, x_i) - \nabla k(\theta, \mathcal D)k(\mathcal D, \mathcal D)^{-1} \mathbf{y}_i\|^2 \leq \|f\|_\mathcal H^2 \text{Tr}(\nabla k_\mathcal D(\theta, \theta)\nabla^\top).
    \end{align}
    To conclude our proof, 
     we first prove the following intermediate lemma.
    \begin{lemma}
        Let $g_t$ be as defined in Algorithm \ref{algorithm}. Under Assumptions \ref{assumption:rkhs} and \ref{assumptions:noiselessall} and for $\sigma =0$, we have, for all $i \in [n]$, $t \in [T]$,
    \[
    \left \| g_t - \nabla f_\mathbf{x}(\theta^{(t)})  \right \| \leq C_\mathcal{X} \sqrt{\text{Tr}(\nabla k_{\mathcal{D}_t} (\theta^{(t)}, \theta^{(t)})\nabla^\top)}
    \]
    \end{lemma}
    \begin{proof}
         We start by using the triangle inequality to go to the individual gradients, then use Lemma \ref{lem:projection} to remove the projection operator, and lastly apply \eqref{eq:rkhsboundap}:
    \begin{align*}
        \left \| g_t - \nabla f_\mathbf{x}(\theta^{(t)}) \right \| &= \left \| \frac{1}{n} \sum_{i=1}^n \Pi_B \left ( g_t^{(i)}(\theta^{(t)}) \right ) - \frac{1}{n} \sum_{i=1}^n \nabla \mathcal{L}(\theta^{(t)}, x_i)  \right \| \\
        & \leq \frac{1}{n} \sum_{i=1}^n \left \| \Pi_B \left ( g_t^{(i)}(\theta^{(t)}) \right ) - \nabla \mathcal{L}(\theta^{(t)}, x_i)  \right \| \\
        & \leq \frac{1}{n} \sum_{i=1}^n \left \| g_t^{(i)}(\theta^{(t)}) - \nabla \mathcal{L}(\theta^{(t)}, x_i)  \right \| \\
        & \leq \frac{1}{n} \sum_{i=1}^n C_\mathcal{X} \sqrt{\text{Tr}(\nabla k_{\mathcal{D}_t} (\theta^{(t)}, \theta^{(t)})\nabla^\top)} \\
        &= C_\mathcal{X} \sqrt{\text{Tr}(\nabla k_{\mathcal{D}_t} (\theta^{(t)}, \theta^{(t)})\nabla^\top)}.
    \end{align*}
    \end{proof}
    The proof now follows almost identically to the proof of Theorem \ref{theorem:functionEvalsNoisy}, with the difference being that $C_t = C_\mathcal{X} \sqrt \varepsilon$. We conclude, following the same steps, that 
\begin{align*}
F_T
&\le
(1-\eta\tau)^T F_0
+
\left(
\frac{\eta\varepsilon C_\mathcal{X}^2}{2}
+
\eta\mathcal V
\right)
\sum_{s=0}^{T-1}(1-\eta\tau)^s \\
&\le
(1-\eta\tau)^T F_0
+
\frac{\varepsilon C_\mathcal{X}^2}{2\tau}
+
\frac{\mathcal V}{\tau},
\end{align*}
with $\mathcal{V} = O\Big(\frac{\sqrt{T(d + \log \frac{T}{\delta})}}{n\mu} \Big)$ as in Theorem \ref{theorem:functionEvalsNoisy}.
\end{proof}

Furthermore, we have the following bound on the number of function evaluations required to achieve the minimal achievable suboptimality gap under $\mu$-GDP by Theorem \ref{thm:Goodconvergence}.
\begin{lemma}
    \label{lem: noiselessfunctionevalbound}
    Let the Assumptions and conditions of Theorem \ref{thm:Goodconvergence} hold. Then, to achieve with probability at least $1-\delta$ the suboptimality gap
    \[
    F_T = O\left( \frac{\sqrt{\log n({d + \log\frac{\log n}{\delta})}} }{n\mu}  \right) =: \mathcal{E}_{priv},
    \]
    $O(d \log n)$ function evaluations suffice.
\end{lemma}
\begin{proof}
    This is a simple extension of Theorem \ref{thm:Goodconvergence}. By Lemma 2 of \citet{wu2024behavior}, $\mathcal{D}$ of size $d+1$ suffices to obtain
    \[
    \text{Tr}(\nabla k_\mathcal{D}(\mathbf{0}, \mathbf{0}) \nabla^\top) = 0.
    \]
    By stationarity of $k$, and the fact that posterior covariance is monotone under adding observations, it follows that in each iteration at most $d+1$ function evaluations suffice to achieve $\text{Tr}(\nabla k_\mathcal{D}(\mathbf{0}, \mathbf{0}) \nabla^\top) = 0$. Hence, taking the bound of Theorem \ref{thm:Goodconvergence} and plugging in $\varepsilon = 0$ and $T \asymp \log n$, we obtain
    \[
    F_T = O\left( \frac{\sqrt{\log n(d + \log \frac{\log n}{\delta})}}{n\mu} \right).
    \]
    The total number of required function evaluations in this case is $(d+1)T \sim (d+1) \log n$, such that at most $O(d \log n)$ function evaluations suffice to achieve the stated suboptimality gap.
\end{proof}
Here, $\mathcal{E}_{priv}$ is a privacy-induced floor on the suboptimality gap achievable through gradient descent methods, and is also found in other analyses of private gradient descent such as \citet{avella2023differentially}.

\section{Examples}
\label{appendix:examples}
In this section, we provide details for replicating the results of examples from the main manuscript. Code to replicate these examples can be found at

\begin{center}
\url{https://anonymous.4open.science/r/DPGIBO-9E44/}
\end{center}

In addition to this, we present two additional experiments. Further comparisons between (non-private) GIBO and global approaches can be found in \citet{muller2021local} and \citet{wu2024behavior}.

\subsection{Hyperparameter tuning in group LASSO}

We construct a synthetic regression task in which the design matrix $\mathbf{X}\in\mathbb{R}^{n_{\mathrm{train}}\times p}$ is partitioned into $K = d$ groups of $g = 5$ contiguous columns each, so $p = K g$. Within each group, columns are noisy copies of a shared latent factor with intra-group correlation $\rho = 0.9$; only the first $K_{\mathrm{inf}} = 1$ group carries signal, with its true coefficients drawn from $\mathcal{N}(0, I_g)$, while the remaining $K-1$ groups are pure noise. Targets are generated as $y = \mathbf{X}\boldsymbol{\beta}^{\star} + \varepsilon$ with $\varepsilon \sim \mathcal{N}(0,1)$, and the data is split into $n_{\mathrm{train}} = 50$ training and $n_{\mathrm{test}} = 150$ test samples, standardized using the training statistics. We vary the hyperparameter dimension $d \in \{2, 10, 20\}$ and run $n_{\mathrm{seeds}} = 5$ independent seeds per setting.

The hyperparameter is the per-group log-regularization vector $\boldsymbol{\theta} = (\theta_1, \ldots, \theta_d)$ with $\theta_k = \log \lambda_k$ and box constraints $\theta_k \in [-6,\, 0]$. For each query, the inner solver runs $200$ iterations of ISTA on the group LASSO objective
\[
\min_{\mathbf{w}\in\mathbb{R}^{p}}\;
\frac{1}{2 n_{\mathrm{train}}} \|\mathbf{X}^{\mathrm{train}} \mathbf{w} - \mathbf{y}^{\mathrm{train}}\|_2^2
\;+\; \sum_{k=1}^{d} e^{\theta_k} \, \|\mathbf{w}_{G_k}\|_2,
\]
and the per-user loss is the half squared residual on each held-out test point. All methods are run for $T = 20$ outer iterations with inner batch size $b = d+1$, gradient clip norm $B = 0.1$, AdaGrad step size $\eta = 1$, and an RBF kernel of lengthscale $\ell = 1$. The DP arm uses $\mu$-GDP with $\mu = 1$; the GIBO baseline runs the same algorithm with $\mu = \infty$ (no privacy noise); random search samples uniformly from the same box $[-6, 0]^d$ for the matched evaluation budget $T(d+1)$. DP-GIBO and GIBO are initialized at $\boldsymbol{\theta}_0$ drawn uniformly from the same box.

\subsection{Hyperparameter tuning in Gaussian Process
Regression}

The results corresponding to the additional experiments can be found in Figure \ref{fig:main-experiments} (a)-(d). All data in this block was generated from a $15$-dimensional GP with RBF kernel; the optimizable hyperparameters consist of $15$ length scales, such that $\Theta \subset \mathbb R^{15}$.

In the first experiment, we varied the bias control $\varepsilon$ from a low to a high value, keeping all other model parameters the same. The results are shown in \Cref{fig:main-experiments} (a) and (b). These plots agree with our theory: higher $\varepsilon$ leads to a lower number of function evaluations required to complete all iterations, but the final attained loss will be worse. As mentioned in the discussion of Theorem \ref{thm:Goodconvergence}, decreasing $\varepsilon$ past the point where the privacy error dominates bias error ceases to improve performance; this can be seen in $\varepsilon = 0.3$ and $\varepsilon = 0.5$ producing roughly the same final loss. 

In our next experiment, displayed in \Cref{fig:main-experiments} (c), we varied the privacy level from very strong privacy, to strong privacy, to no privacy. It is clear that in our moderate $n$ ($700$) example, high levels of privacy come at a significant suboptimality gap cost.

Finally, in \Cref{fig:main-experiments} (d), we plot the final loss achieved after $45$ iterations of DP-GIBO with \textit{fixed} batch size $b = d+1$ for various levels of evaluation noise. The plot indicates that the noiseless and low noise regimes perform similarly, whereas the large noise regime suffers. This confirms our theoretical findings in Theorem \ref{theorem:functionEvalsNoisy}, which suggested that low enough noise produces the same performance as zero noise. This also implies that DP training can come for free as long as the induced noise remains small.

Throughout this example, we used RBF kernel with lengthscale equal to 1. The studied dimension was \( d = 15\), total number of data points was \( n = 2000\) with \( 50-50\) split between training and validation data.

In the example when examining the performance on different privacy levels \(\mu\), we used noise level \( \sigma = 0.05\), bias control level \( \varepsilon = 0.5\), clipping constant \( B = 3\), and we ran the optimization for \( T = 25\) steps.

In the example when examining the performance on different noise levels \(\sigma\), we used the full batch size \(b = d + 1\), clipping constant \( B = 1\), privacy level was set to \( \mu = 0.1\), and we ran the optimization for \( T = 45\) steps.

In the example when examining the performance on different bias control levels \(\varepsilon\), we used clipping constant \( B = 3\), noise level \( \sigma = 0.05 \), privacy level was set to \( \mu = 1\), and we ran the optimization for \( T = 25\) steps. 

For all of the methods, we do 10 independent replications, with random restarts inside the optimization area. 
Finally, instead of doing standard descent update, we have used AdaGrad updates with our privatized estimate of the gradient with \( \eta = 0.3 \).

\begin{figure*}[t]
    \centering
    \includegraphics[width=\textwidth]{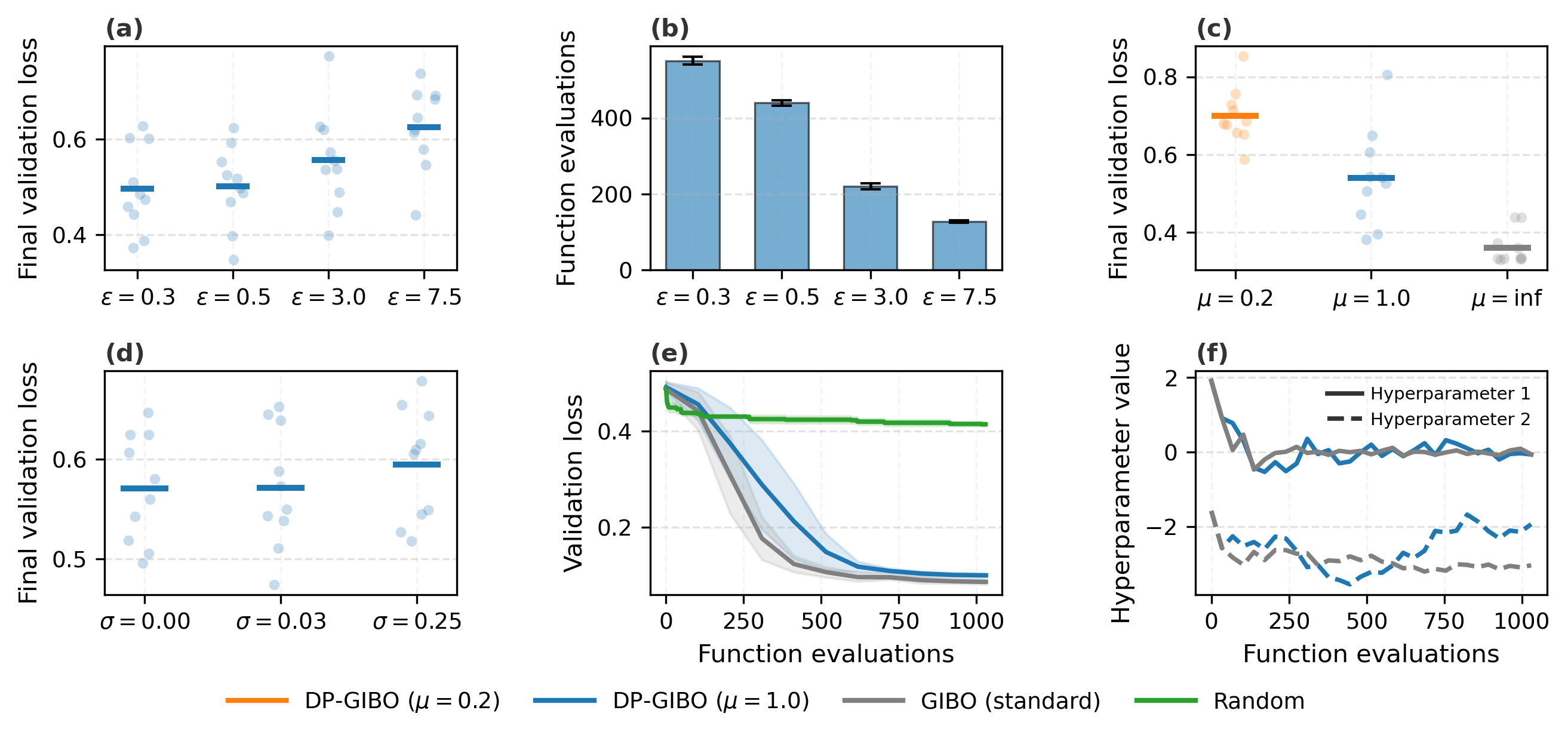}
    \caption{\textit{Results of experiments.}
    \textbf{(a)-(d).} We present the results of Example \ref{example:GP}, where we test the performance of Algorithm \ref{algorithm} on tuning the GP regression lengthscales in $d = 15$ dimensions, and we vary the level of permitted bias \( \varepsilon \), privacy level \( \mu \), and noise level \( \sigma \). 
    \textbf{(e)-(f).} In Example \ref{sec:svmtuning}, we compare our privatized method to non-private GIBO (i.e.\ \( \mu = \infty \)) and to non-private random search.}
    \label{fig:main-experiments}
\end{figure*}

\subsection{Hyperparameter tuning of an SVM}
We tune a support-vector regressor (SVR) on the \emph{Relative Location of CT Slices} dataset~\cite{relative_location_of_ct_slices_on_axial_axis_206}, drawing $n_{\mathrm{train}} = 1000$ training and $n_{\mathrm{val}} = 9000$ validation examples uniformly at random and restricting attention to a random subset of $p = 100$ input features; the full-dimensional problem is impractical given the cost of refitting the SVR at every query. The
hyperparameter vector to be optimised is
\[
    \boldsymbol{\theta} \;=\;
    \big(\log\ell_1,\,\ldots,\,\log\ell_p,\;\log\varepsilon,\;\log C,\;\log\gamma\big)
    \;\in\;\mathbb{R}^{d}, \qquad d \;=\; p + 3 \;=\; 103,
\]
where $\ell_i$ are the per-feature log length-scales of the SVR's RBF kernel and $(\varepsilon, C, \gamma)$ are the $\varepsilon$-tube width, the regularisation strength, and the kernel bandwidth, respectively. Each coordinate is optimised in log-space over a box:  $\log\ell_i \in [-2, 2]$, $\varepsilon \in [0.01,\, 1.0]$, 
$C \in [0.1,\, 3.0]$, and $\gamma \in [0.01,\, 5.0]$. The per-user contribution is the squared error on a single held-out validation example, and the figure of merit minimised by every method is the mean squared error over the validation set.

We perform $n_{\mathrm{seeds}} = 5$ independent replications, each initialised by sampling $\boldsymbol{\theta}_0$ uniformly from the optimisation domain. All methods are run for $T = 10$ outer iterations with inner batch size $b = d + 1 = 104$ inducing points; AdaGrad updates with base step size $\eta = 0.8$ and per-user gradient clip $B = 1$ are applied to the (noised) gradient estimate. The differentially private arm uses $\mu$-Gaussian DP with $\mu = 0.2$, while the ``GIBO'' baseline applies the same algorithm with $\mu = \infty$ (no noise), and random search samples $\boldsymbol{\theta}$ uniformly over the same box for a matched evaluation budget of $T \cdot b$ queries. For the GP surrogate used inside DP-GIBO and GIBO we adopt an RBF kernel with lengthscale equal to $1$.

We also show, in \Cref{fig:main-experiments} (f), the optimization path of two parameters during one run of both the private and non-private algorithms, showing that the estimates converge to an optimum and then oscillate around it, as typically occurs when optimizing with noisy gradient descent.

\subsection{Sensitivity to specification of $\sigma$}
\label{example:noise}
We once again consider the Gaussian Process Regression setup from Example \ref{example:GP}. The only difference here is that we do not observe the actual validation losses \( f \) but instead observe \( f + \lambda Z \), where \( Z \sim N(0, 1) \) and we set \( \lambda = 0.01 \). Thus, the true noise that appears in our function evaluations has standard deviation $0.01$.  We then proceed to privately (\(\mu=1\)) find the best hyperparameter using Algorithm \ref{algorithm}, adapted for the noisy case as described in Section \ref{section:noisy}.  

We run our algorithm three times: first by supplying it with the true value of noise standard deviation (\(\sigma = 0.01\)), then with an overestimate (\(\sigma = 1.0\)), and finally with an underestimate (\(\sigma = 0.0001\)). As shown in Figure \ref{fig:noiseGP}, the correct choice of the noise standard deviation parameter results in the best average performance. The \(\sigma = 1.0\) setting makes our estimate conservative but still viable, whereas underestimating the noise parameter leads to worse performance than a random search.

\begin{figure}[H]
    \centering
    \includegraphics[width=.9\textwidth]{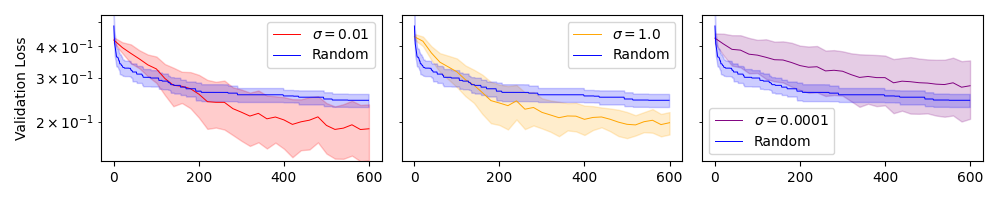}
    \caption{\textbf{Left:} We used the correct variance of the noise; \textbf{Middle:} We overestimated the variance of the noise; \textbf{Right:} We underestimated the variance of the noise}
    \label{fig:noiseGP}
\end{figure}

\subsection{Sensitivity to the Clipping Parameter $B$}\label{sensitivityB}
Finally, we consider the Gaussian Process Regression problem from Example \ref{example:GP}, and set $n = 1000$, $\mu = 1$, $\sigma = 0.05$, and $\varepsilon = 0.5$. We vary the clipping threshold $B$ over four values with a wide spread, in order to understand the sensitivity of our algorithm to the particular clipping threshold chosen, and we again use AdaGrad step-size updates (where we use the privatized gradients so that privacy is maintained by the Post-Processing property of $\mu$-GDP). The results of this sensitivity study can be found in Figure \ref{fig:clippingthreshold}.
\begin{figure}[H]
    \centering
    \includegraphics[width=0.9\linewidth]{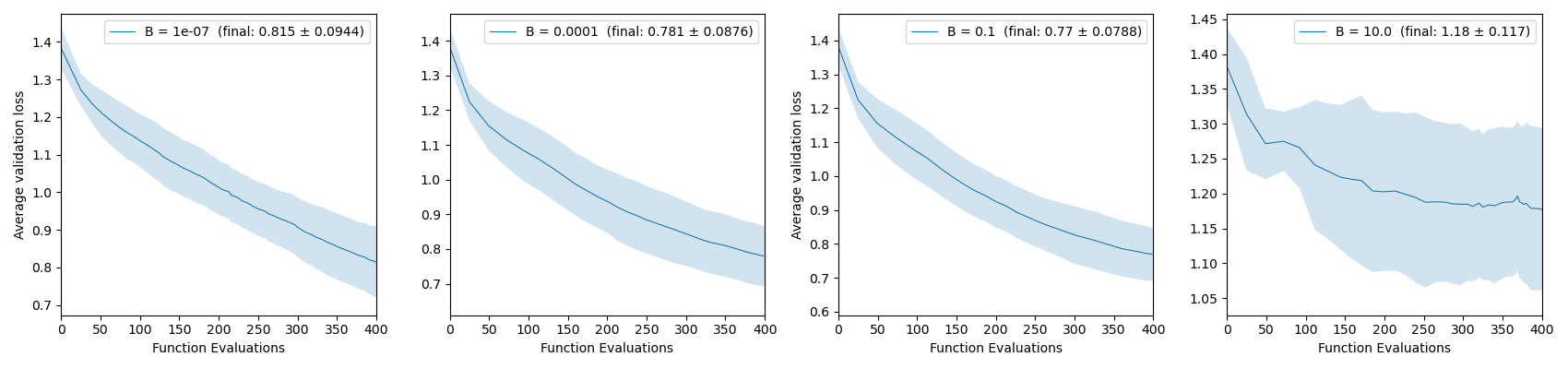}
    \caption{Validation loss of the Gaussian Process Hyperparameter Tuning experiment for various values of $B$}
    \label{fig:clippingthreshold}
\end{figure}

\end{document}